\documentclass{article}
\usepackage{amsmath,amsfonts}
\usepackage{array}
\usepackage{textcomp}
\usepackage{stfloats}
\usepackage{url}
\usepackage{verbatim}
\usepackage{graphicx}
\usepackage{cite}
\usepackage{tabularx}
\usepackage{tcolorbox}
\usepackage{array}  % for vertical centering in tables
\usepackage{subcaption}
 \usepackage{amsmath}
\usepackage[utf8]{inputenc} % allow utf-8 input
\usepackage{xcolor}
\usepackage{multirow}
\usepackage{xcolor}
\usepackage{comment}

\definecolor{Federated}{RGB}{228,26,28}   % red
\definecolor{Independent}{RGB}{55,126,184} % blue
\definecolor{Random}{RGB}{77,175,74}      % green
\definecolor{FTS}{RGB}{152,78,163}        % purple

\usepackage{graphicx}
\definecolor{burntorange}{RGB}{230,159,0}
\definecolor{softorange}{RGB}{251,193,94}
\definecolor{cobaltblue}{RGB}{0,114,178}
\definecolor{charcoal}{RGB}{0,0,0}
\definecolor{darkgreen}{RGB}{0,158,115}
\definecolor{mintgreen}{RGB}{102,221,170}
\definecolor{magenta}{RGB}{204,121,167}
\definecolor{peach}{RGB}{244,165,193}
\definecolor{brickred}{RGB}{213,94,0}
\definecolor{slategray}{RGB}{153,153,153}
\definecolor{3dblue}{RGB}{0,0,139}
\definecolor{3dgreen}{RGB}{0,100,0}
\definecolor{3dgray}{RGB}{128,128,128}
\definecolor{orange}{RGB}{255,127,0}       % orange

\usepackage{amsmath}

\newcommand{\solidline}[2]{\textcolor{#1}{\rule[0.5ex]{#2}{0.8pt}}}

\usepackage[T1]{fontenc}    % use 8-bit T1 fonts
\usepackage{hyperref}       % hyperlinks
\usepackage{url}            % simple URL typesetting
\usepackage{booktabs}       % professional-quality tables
\usepackage{amsfonts}       % blackboard math symbols
\usepackage{nicefrac}    
\usepackage{hyperref}
\usepackage{microtype}      % microtypography
\usepackage{xcolor}         % colors
\usepackage{enumitem} 
\usepackage{graphicx} 
\usepackage{amssymb}
\usepackage{comment}
\usepackage{amsthm}

\newtheorem{theorem}{Theorem}

\newtheorem{lemma}{Lemma}

\theoremstyle{definition}

\newtheorem{assumption}{Assumption}

\usepackage{microtype}
\usepackage{graphicx}
\usepackage{subcaption}

\usepackage{hyperref}

\usepackage[preprint]{icml2026}

\usepackage{amsmath}
\usepackage{amssymb}
\usepackage{mathtools}
\usepackage{amsthm}

\usepackage[capitalize,noabbrev]{cleveref}

\theoremstyle{plain}
\theoremstyle{definition}
\theoremstyle{remark}

\usepackage[textsize=tiny]{todonotes}

\icmltitlerunning{Collaborative Contextual Bayesian Optimization}

\begin{document}

\twocolumn[
  \icmltitle{Collaborative Contextual Bayesian Optimization}

  % It is OKAY to include author information, even for blind submissions: the
  % style file will automatically remove it for you unless you've provided
  % the [accepted] option to the icml2026 package.

  % List of affiliations: The first argument should be a (short) identifier you
  % will use later to specify author affiliations Academic affiliations
  % should list Department, University, City, Region, Country Industry
  % affiliations should list Company, City, Region, Country

  % You can specify symbols, otherwise they are numbered in order. Ideally, you
  % should not use this facility. Affiliations will be numbered in order of
  % appearance and this is the preferred way.
  \icmlsetsymbol{equal}{*}

\title{Collaborative Contextual Bayesian Optimization}

  \begin{icmlauthorlist}
    \icmlauthor{Chih-Yu~Chang}{1}
    \icmlauthor{Qiyuan Chen}{2}
    \icmlauthor{Tianhan~Gao}{3}
    \icmlauthor{David~Fenning}{4}
    \icmlauthor{Chinedum~Okwudire}{3}
    \icmlauthor{Neil~Dasgupta}{3}
    \icmlauthor{Wei~Lu}{3}
    %\icmlauthor{}{sch}
    \icmlauthor{Raed~Al~Kontar}{2}
     %\icmlauthor{}{sch}
    %\icmlauthor{}{sch}
  \end{icmlauthorlist}

  \icmlaffiliation{1}{Department of Mathematics, Imperial College London, London, United Kingdom}
  \icmlaffiliation{3}{Department of Mechanical Engineering, University of Michigan, Ann Arbor, MI, USA.}
  \icmlaffiliation{2}{Department of Industrial \& Operations Engineering, University of Michigan, Ann Arbor, MI, USA.}
    \icmlaffiliation{4}{Department of Chemical and Nano Engineering, University of California, San Diego, CA, USA}
  \icmlaffiliation{2}{Department of Industrial \& Operations Engineering, University of Michigan, Ann Arbor, MI, USA.}
  \icmlaffiliation{2}{Department of Industrial \& Operations Engineering, University of Michigan, Ann Arbor, MI, USA.}

  \icmlcorrespondingauthor{Raed Al Kontar}{alkontar@umich.edu}

  \icmlkeywords{Collaborative learning, Personalization, contextual Bayesian optimization, optimal design, manufacturing}

  \vskip 0.3in
]

% this must go after the closing bracket ] following \twocolumn[ ...

% This command actually creates the footnote in the first column listing the
% affiliations and the copyright notice. The command takes one argument, which
% is text to display at the start of the footnote. The \icmlEqualContribution
% command is standard text for equal contribution. Remove it (just {}) if you
% do not need this facility.

% Use ONE of the following lines. DO NOT remove the command.
% If you have no special notice, KEEP empty braces:
\printAffiliationsAndNotice{}  % no special notice (required even if empty)
% Or, if applicable, use the standard equal contribution text:
% \printAffiliationsAndNotice{\icmlEqualContribution}

\begin{abstract}
Discovering optimal designs through sequential data collection is essential in many real-world applications. While Bayesian Optimization (BO) has achieved remarkable success in this setting, growing attention has recently turned to context-specific optimal design, formalized as Contextual Bayesian Optimization (CBO). Unlike BO, CBO is inherently more challenging as it must approximate an entire mapping from the context space to its corresponding optimal design, requiring simultaneous exploration across contexts and exploitation within each. In many modern applications, such tasks arise across multiple potentially heterogeneous but related clients, where collaboration can significantly improve learning efficiency. We propose \texttt{CCBO}, Collaborative Contextual Bayesian Optimization, a unified framework enabling multiple clients to jointly perform CBO with controllable contexts, supporting both online collaboration and offline initialization from peers' historical beliefs, with an optional privacy-preserving communication mechanism. We establish sublinear regret guarantees and demonstrate, through extensive simulations and a real-world hot rolling application, that \texttt{CCBO} achieves substantial improvements over existing approaches even under client heterogeneity. The code to reproduce the results can be found at \href{https://github.com/cchihyu/Collaborative-Contextual-Bayesian-Optimization}{this URL}.
\end{abstract}

Identifying an optimal design that yields the best possible response is critical in many applications, where the relationship between the design and the response is typically unknown. This problem is known as a black-box optimization task \cite{frazier2018tutorial} where the search for the optimal response is naturally framed as solving a maximization problem. More specifically, the goal is to identify the optimal design $x^*=\arg\max_x ~ f(x)$, where $f$ is the response function and $x$ represents the design variable(s).

Bayesian optimization (BO) has been established, both theoretically and empirically, as an effective and efficient framework for black-box optimization under limited experimental budgets. BO leverages historical observations to construct a probabilistic surrogate model, which encodes the current belief about the underlying response function. An acquisition function (AF), defined on top of the surrogate, then guides the sequential search by balancing exploration and exploitation. At each iteration, the design that maximizes the AF is selected, its response is observed, and this new data point is incorporated into the historical dataset to update the surrogate model until the evaluation budget is exhausted. For recent surveys on BO and its advances, see \cite{frazier2018tutorial,shahriari2015taking}.

In many real-world applications, the goal is not to identify a single optimal design, but to find context-specific optimal designs. For instance, in medical treatment, the optimal drug dosage (the design) may depend on a patient’s age or weight (the context). Similarly, in manufacturing, the optimal operating parameters may change with target product specifications. The BO counterpart of such problems has been recently referred to as Contextual BO (CBO). More specifically, the response function in CBO is modeled as $f(x,c)$, where $x$ and $c$ denote the design and context variables, respectively. At each iteration, a context-design pair is selected for evaluation (i.e., to observe its response). Here, it is worth noting that in some settings, the context is \textit{controllable} and can be deliberately chosen by the experimenter, while in others it may be observed externally and cannot be controlled. For example, in additive manufacturing, one may optimize printing parameters such as speed or laser power while treating the target part thickness as the context; in this case, the experimenter can choose/control which thickness to test at each iteration. In this paper, we focus on the controllable-context setting, where the learner is free to select which contexts to sample in order to efficiently approximate the context-specific optimal design $x^*(c)=\arg\max_x f(x,c)$ by sequentially collecting context-design-response observations. This setting is often referred to as offline CBO (OCBO) \cite{char2019offline}, which forms an important branch of CBO.

Given this formulation, BO is a special case of CBO in which the context space collapses to a single point \cite{auer2002using, abe2003reinforcement}. From a practical perspective, a central benefit of CBO is that it can act as a \textit{recommendation system}. In particular, once the context-specific optimal design function $x^*(c)$ is learned, it can serve as a recommender: for any observed context $c$ (e.g., a target part thickness), one can immediately obtain the optimal design variables (printing speed and laser power) by evaluating $x^*(c)$, without restarting the experimentation process.

That said, while practically relevant, CBO is substantially more resource-consuming than BO, as it introduces a new challenge: estimating an entire function that maps contexts to their corresponding optimal designs, instead of identifying a single optimal point. This requires exploration across the context space, in addition to the usual exploration-exploitation tradeoff within each context. Fortunately, many modern applications involve multiple related clients or experimental units, where collaboration can significantly improve learning efficiency. For example, similar manufacturing machines operating at different locations may seek to collaborate so that context-specific optimal designs can be learned more efficiently by borrowing strength from each other.

With the rise of collaborative and distributed learning paradigms such as federated learning \cite{yue2024federated,yue2023gifair} bringing collaboration into optimal design has become more important than ever. Motivated by this opportunity, we propose Collaborative Contextual BO (\texttt{CCBO}), a unified framework that enables multiple clients to jointly perform CBO through cross-client collaboration. To the best of our knowledge, this is the first collaborative approach explicitly designed to enhance the efficiency of CBO. The main contributions of this work are summarized as follows:

\begin{itemize}
    \item \textbf{Collaborative CBO framework}: 
    We introduce a unified framework for CBO that enables multiple, \textit{potentially heterogeneous}, clients to jointly learn the mapping $x^*(c)$ by sharing information across related tasks, improving sample efficiency compared to independent optimization. The framework also supports an offline collaborative initialization mode, where a single client benefits from peers' historical knowledge without requiring coordinated experimentation.

    \item \textbf{Disagreement-driven switching}: 
    We propose a novel decision mechanism that identifies contexts where local and global recommendations differ and uses this disagreement to guide information sharing. This is coupled with an adaptive switching strategy that balances collaborative and client-specific decisions over time.
    
    \item \textbf{Theoretical guarantees}: 
    We establish sublinear regret for the proposed method, showing that it consistently improves its approximation of context-specific optimal designs as more data are collected.
\end{itemize}

%Furthermore, we show that the algorithm satisfies both the handover property and the no-harm guarantee first introduced by \cite{xu2024principled}.

The remainder of this paper is organized as follows. In Sec. \ref{sec:related}, we review the related literature. In Sec. \ref{sec:fcbobig}, we provide the mathematical formulation of the problem. The proposed algorithm is introduced in Sec. \ref{sec:FCBO}. Its theoretical properties are established in Sec. \ref{sec:theres} and \ref{sec:qualify}. Sec. \ref{sec:sim} presents extensive evaluations of the proposed algorithm on multiple benchmarks under varied settings. A real-world application to metal manufacturing is then demonstrated in Sec. \ref{sec:app}. Finally, we conclude the paper in Sec. \ref{sec:con}, where we also discuss limitations and directions for future work.

\section{Related Works}\label{sec:related}
As highlighted earlier, the key distinction between BO and CBO is that CBO requires the learner to approximate a context-dependent optimal design function $x^*(c)$ rather than identifying a single global optimum. Existing work on CBO can be broadly categorized based on whether the design and/or context spaces are treated as discrete or continuous. The discrete case is often studied under the \emph{contextual bandit problem} framework and has been investigated for nearly two decades \cite{abe2003reinforcement,auer2002using}, while the continuous case corresponds to the CBO setting, which has gained significant attention more recently \cite{krause2011contextual,char2019offline}.

In contextual bandits, the design space (often referred to as the action space in this literature) is assumed to be discrete, while the context space may be discrete or continuous. The response function is modeled jointly over contexts and actions, and the goal is to learn a policy that selects the best action for each observed context. For example, \cite{chu2011contextual} studied contextual bandits under linear response models, while \cite{slivkins2011contextual} treated the context as similarity information across different actions. This literature has also extended popular AFs in BO to the contextual bandit setting, including the Upper Confidence Bound (UCB) \cite{xu2020upper} and Expected Improvement \cite{gupta2022expected}.

\paragraph{Recent Advances in CBO} More recently, researchers have investigated the case where both the context and design spaces are continuous, which corresponds to the classical CBO setting. When the context is uncontrollable, once the client receives the context at each iteration, the goal is to select the best design that achieves the highest response under that given context. \cite{krause2011contextual} proposed one of the first algorithms that employed Gaussian processes ($\mathcal{GP}$) to model the response function jointly over context and design, and developed a UCB-based algorithm to identify context-specific designs. More recently, \cite{park2020contextual} proposed a trust-region-based algorithm for CBO that improves optimization efficiency in continuous spaces.

Another research direction investigates the case where the context is controllable, which is relatively new and more challenging than the uncontrollable setting. In this setting, the learner is free to choose which contexts to sample, and the objective is to approximate the context-specific optimal design function by sequentially collecting context--design--response observations. This problem was first studied by \cite{char2019offline}, who proposed a multi-task Thompson sampling (MTS) approach. They emphasized the importance of two key tasks that are critical for success in this setting: (i) exploration of the context space as a high-level task, since the ultimate goal is to identify the optimal design across all contexts, and (ii) selection of the design given the chosen context, which requires balancing exploration and exploitation. More recently, a two-level approach was proposed by \cite{le2025controller}, which employs two $\mathcal{GP}$ models: one to map the context space to the corresponding optimal design, and another to model the response function jointly over the context and design spaces. These works highlight the unique challenges of controllable-context CBO, where learning an accurate context-to-optimal-design mapping requires efficient exploration across contexts in addition to standard design optimization.

%In addition, \cite{zhan2025collaborative} introduced a collaborative scheme in which clients share surrogate models under communication constraints. Collectively, these works highlight the need for collaborative BO in settings where data privacy must be preserved.

\paragraph{Applications in CBO} Compared to BO, the development of CBO is still in its infancy. Nevertheless, its successful applications have highlighted the importance of this problem setting. For instance, CBO has been employed in personalized treatment allocation to improve cancer therapy strategies by adaptively selecting effective interventions based on patient-specific contexts \cite{durand2018contextual}. Similarly, \cite{cereda2021cgptuner} applied CBO to database management system autotuning, where the optimal configurations depend on workload conditions and IT stack parameters. More recently, \cite{liu2024contextual} leveraged CBO for congestion pricing, efficiently designing distance-based toll schemes by incorporating temporal contextual information from day-to-day traffic dynamics. These examples illustrate the versatility of CBO in optimizing decision-making under diverse contextual conditions.

\paragraph{Collaborative BO} Recent advances in federated and distributed learning, together with the increasing computational capabilities of edge devices, have motivated a new research direction: enabling multiple clients to collaboratively search for optimal designs while preserving their own historical observations. The key motivation is that \textit{collaboration can accelerate the trial-and-error process}, allowing clients to reach high-performing designs faster than if they were optimizing independently. Collaborative BO was first studied by \cite{dai2020federated, dai2021differentially}, who proposed federated Thompson sampling (FTS) to distribute BO efforts across clients while preserving privacy. Later, \cite{yue2025collaborative} addressed the challenge of client heterogeneity, where each client aims to recover its individualized optimal design. They proposed consensus BO, in which each client selects its next design as a weighted sum of the design points suggested by all clients. %, with the weights gradually adjusted so that each client increasingly focuses on its own response patterns. 
More recently, \cite{chen2025multi} extended this line of work by proposing an automatic hedging mechanism: at each iteration, when a client receives design suggestions from other clients, it draws a finite number of posterior samples and discards those expected to yield worse performance. A recent survey on this topic can be found in \cite{al2024collaborative}. Overall, these works demonstrate that collaborative BO can significantly accelerate the discovery of optimal designs, both theoretically and in practice \cite{liu2024scalable,zhan2025collaborative}. However, these approaches focus on identifying a single optimal design per client. In contrast, our setting requires learning a context-dependent mapping $x^*(c)$, where collaboration must account for variation across the context space.

With this, we conclude by noting that, to the best of our knowledge, this is the first paper to bring CBO into a distributed and collaborative paradigm.

% However, when it comes to the collaborative contextual bandit problem, the literature remains limited. The only existing works focus on contextual bandits with linear payoffs, which correspond to the case where the context is not controllable \cite{huang2023federated, huang2021federated, zhou2023differentially}. The design of collaboration schemes for CBO, particularly in the setting with controllable contexts, remains an open problem and presents many opportunities for future research.

\section{Collaborative Contextual BO}\label{sec:fcbobig}

\subsection{Problem settings and surrogate modeling} 
\label{sec:Problem_settings}
We consider $K$ clients, each seeking their individual \textit{context-specific optimal design} by sequentially collecting context--design--response triplets. 
Let $\mathcal{X}$ denote the design space, $\mathcal{C}$ the context space, 
and for each client $k \in [K]$, let the response function be
\[
f_k : \mathcal{X} \times \mathcal{C} \;\to\; \mathbb{R}.
\]
The \textit{context-specific optimal design} for client $k$ under context $c \in \mathcal{C}$, denoted $x^{*}_k(c)$, is defined as
\[
x^{*}_k(c) \;=\; \underset{x \in \mathcal{X}}{\operatorname{argmax}} \; f_k(x, c).
\]

\begin{figure}[ht]
    \centering
        \begin{subfigure}[t]{0.25\textwidth}
        \includegraphics[width=\linewidth]{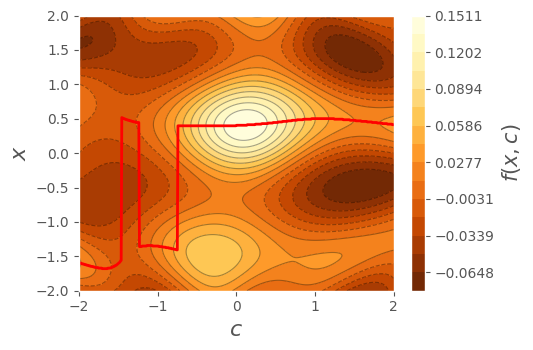}
        \caption{CBO}
    \end{subfigure}
    \hfill
    \begin{subfigure}[t]{0.21\textwidth}
        \includegraphics[width=\linewidth]{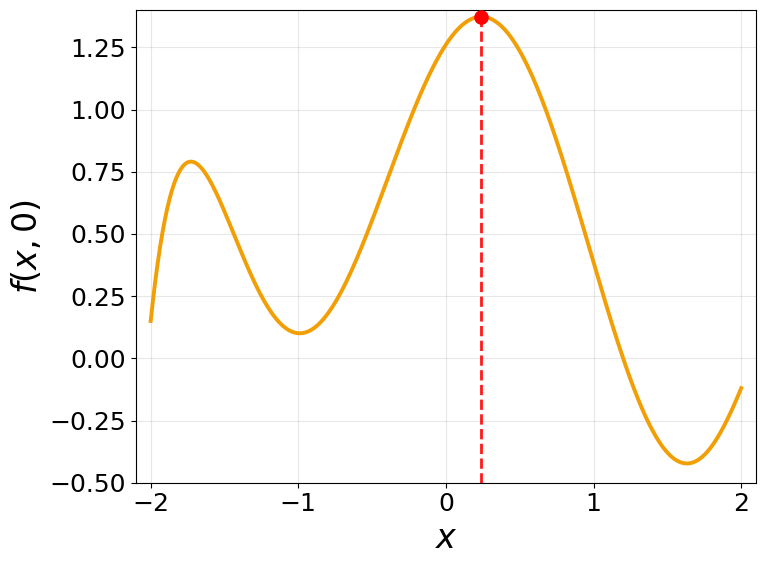}
        \caption{BO}
    \end{subfigure}

    \caption{Comparison of BO and CBO. \textbf{(a)}: The heat map shows the value of $f(x,c)$ and red curve shows $x^{*}_k(c) = \arg\max_{x \in \mathcal{X}} f_k(x, c)$. \textbf{(b)}: The orange curve shows the response function at $c=0$ and the red dot marks the single optimal design $x^*(0)=\arg\max_x f(x,0)$ (context fixed).}
    \label{fig:BOCBO}
\end{figure}
Fig.~\ref{fig:BOCBO} illustrates the conceptual difference between CBO and BO. In the left panel, the red curve represents the context-specific optimal design, which associates each context (x-axis) with the design (y-axis) that maximizes the function value $f(x,c)$. In contrast, the right panel shows the goal of BO, where the red dot indicates the design that maximizes $f(x)$ when the context is fixed. The function in the right panel is identical to that in the left panel but with $c=0$. In this sense, BO can be regarded as a special case of CBO, where the context space degenerates to a singleton.

To establish notation, we assume that each client is allocated a budget of $T$ experimental iterations. Prior to these iterations, each client $k$ is initialized with a dataset $\mathcal{D}_{k,0}$. At iteration $t \in [T]$, client $k$ selects a design–context pair $(x_{k,t}, c_{k,t})$ and observes a response
\[
y_{k,t} \;=\; f_k(x_{k,t}, c_{k,t}) + \epsilon_{k,t},
\]
where $\epsilon_{k,t}$ denotes observation noise with mean zero and variance $\sigma_\epsilon^2$. The resulting triplet $(x_{k,t}, c_{k,t}, y_{k,t})$ is then incorporated into the dataset via the update \(\mathcal{D}_{k,t} = \mathcal{D}_{k,t-1} \cup \{(x_{k,t}, c_{k,t}, y_{k,t})\} \).

At time $t$, given the historical observations $\mathcal{D}_{k,t}$, we fit a surrogate model to represent the posterior belief about the response function $f_k(x,c)$ conditioned on $\mathcal{D}_{k,t}$. A common choice is a $\mathcal{GP}$, which provides both predictions at unseen inputs and a corresponding measure of uncertainty. We denote this posterior by $\mathcal{GP}(\mathcal{D}_{k,t})$. A $\mathcal{GP}$ is specified by a kernel function $\mathcal{K}:(\mathcal{X}\times\mathcal{C})^2 \to \mathbb{R}$ and a prior mean function $\mu(\cdot)$, which is assumed to be zero. 

Conditioned on $\mathcal{D}_{k,t}$, the $\mathcal{GP}$ posterior yields two key quantities for any design-context pair $(x, c)$: the posterior mean $\mu_{k,t}(x, c)$ and the posterior variance $(\sigma_{k,t}(x, c))^2$. These quantities admit closed-form expressions:
\[
\begin{aligned}
\mu_{k,t}(x, c) &= \mathbf{k}_{k,t}(x, c)^\top \left( \mathbf{K}_{k,t} + \sigma_\epsilon^2 \mathbf{I} \right)^{-1} \mathbf{y}_k, \\
\sigma_{k,t}(x, c)^2 &= \mathcal{K}((x, c), (x, c)) \\
&\quad - \mathbf{k}_{k,t}(x, c)^\top \left( \mathbf{K}_{k,t} + \sigma_\epsilon^2 \mathbf{I} \right)^{-1} \mathbf{k}_{k,t}(x, c).
\end{aligned}
\]
Here, $\mathbf{k}_{k,t}(x, c)$ denotes the vector of kernel evaluations between $(x,c)$ and the observed input pairs in $\mathcal{D}_{k,t}$ under $\mathcal{K}$, $\mathbf{K}_{k,t}$ is the corresponding kernel matrix formed by evaluating $\mathcal{K}$ over all observed input pairs in $\mathcal{D}_{k,t}$, and $\mathbf{y}_k = (y_{k,1}, \ldots, y_{k,t})^\top$.

\subsection{\texttt{CCBO}: Collaborative Contextual Bayesian Optimization}\label{sec:FCBO}

While Fig.~\ref{fig:BOCBO} highlights that CBO is more complex than classical BO, the central question is how to design a mechanism that enables multiple clients to jointly identify their context-specific optimal designs. To this end, we propose a simple yet effective collaboration scheme. Our method does not require sharing raw observations $\mathcal{D}_{k,t}$ and can be implemented efficiently. Moreover, it can be extended to support privacy-preserving operation, as proposed in Sec.~\ref{sec:qualify} and empirically evaluated in Sec~\ref{sec: fed}.

\paragraph{Collaboration via disagreement-driven decision making}

The surrogate model plays a central role in CBO. Each client $k$ maintains a Gaussian process surrogate $\mathcal{GP}(\mathcal{D}_{k,t})$ based on its own observations. However, the model used to make the next sampling decision need not coincide exactly with this local surrogate. To distinguish these two roles, we refer to the model used for decision-making as the \emph{operational model}. This distinction is especially important in the collaborative setting. Early in the optimization process, each client typically has only a limited number of observations, so its local surrogate may be too uncertain to reliably identify the most informative context-design pair. To improve decision-making in this regime, we construct a collaborative operational model by aggregating information across clients.

Specifically, at iteration $t$, we define
\begin{equation}\label{eq:mubar}
\bar{\mu}_{t-1} = \frac{1}{K}\sum_{k=1}^K \mu_{k,t-1}
\end{equation}
as the average posterior mean across all clients. Unlike the local posterior mean $\mu_{k,t-1}$, which is based only on client $k$'s dataset, $\bar{\mu}_{t-1}$ pools information from all clients and serves as a shared operational model.

Using $\bar{\mu}_{t-1}$, we identify contexts where client $k$'s current belief may be most unreliable. To formalize this, let
\[
x_{k,t-1}^{*}(c) = \arg\max_x \mu_{k,t-1}(x,c),
\]
denote client $k$'s current locally optimal design at context $c$, which may be suboptimal due to limited or noisy local observations, and let
\[
\bar{x}_{t-1}^{*}(c) = \arg\max_x \bar{\mu}_{t-1}(x,c),
\]
denote the design recommended by the collaborative operational model. We define the \textit{collaborative improvement potential} for client $k$ at context $c$ as
\begin{equation}\label{eq:deltaB}
\Delta_{k,t-1}^{\text{collab}}(c)
:= \bar{\mu}_{t-1}\big(\bar{x}_{t-1}^*(c),c\big)
-\bar{\mu}_{t-1}\big(x_{k,t-1}^{*}(c),c\big).
\end{equation}

This quantity is always nonnegative by construction. Importantly, the improvement is evaluated under the collaborative model $\bar{\mu}_{t-1}$ rather than the local model $\mu_{k,t-1}$, as $\bar{\mu}_{t-1}$ serves as the operational model guiding context selection. A larger value of $\Delta_{k,t-1}^{\text{collab}}(c)$ indicates that, under this shared model, client $k$'s current design is suboptimal at context $c$, thereby identifying contexts where local and global recommendations disagree the most. This criterion can be interpreted as targeting contexts where the client's local estimate is most unreliable. At such contexts, the algorithm evaluates the design recommended by the collaborative model, thereby resolving the discrepancy and improving the local model.

Accordingly, under the collaborative scheme, client $k$ selects the next context as
\begin{equation}\label{eq:ctb}
c_{k,t}^{\text{collab}} = \arg\max_c \Delta_{k,t-1}^{\text{collab}}(c),
\end{equation}
and then selects the corresponding design as
\begin{equation}\label{eq:xtb}
x_{k,t}^{\text{collab}} = \bar{x}_{t-1}^*(c_{k,t}^{\text{collab}}).
\end{equation}
This collaborative rule is particularly effective when clients are similar and local datasets are still small, since pooling information improves the calibration of the operational model. In the homogeneous case, $\bar{\mu}_{t-1}$ is expected to approach the common response function faster than any individual local model. More generally, even under moderate heterogeneity, the shared model provides a useful guide in early stages.

However, collaboration is not uniformly beneficial. When client response functions differ, the shared model $\bar{\mu}_{t-1}$ reduces variance but introduces bias relative to the client-specific function $f_k$. As more local data are collected, the local surrogate $\mu_{k,t-1}$ may become more accurate than the global model. This motivates a mechanism that dynamically transitions from collaborative to client-specific decision-making.

\paragraph{Adaptive switching between collaborative and independent decisions}

We introduce a probability-based switching mechanism that dynamically balances collaborative and independent decision-making. At iteration $t$, each client selects between the two schemes through a Bernoulli gate with probability $p_t$. With probability $p_t$, the client follows the collaborative strategy to select the next design--context pair using \eqref{eq:ctb} and \eqref{eq:xtb}. Otherwise, with probability $1 - p_t$, the client makes decisions independently. Such probability-based switching mechanisms have been used in BO \cite{dai2021differentially} to reduce reliance on shared or approximate models. However, to the best of our knowledge, they have not been studied in the context of collaborative contextual Bayesian optimization.

Under the independent scheme, each client constructs its operational model by sampling from its own surrogate model, rather than relying on a global or aggregated model. Specifically, at time $t$, the operational model is defined as a posterior sample $\tilde{f}_{k,t-1}$ drawn from the client’s posterior $\mathcal{GP}(\mathcal{D}_{k,t-1})$. This follows a Thompson sampling (TS) approach, where decisions are made using a sampled function to balance local exploration and exploitation.

In practice, this sample is obtained by drawing function values over a finite candidate set. Specifically, let $z=(x,c)\in\mathbb{R}^D$ denote the concatenated design--context input, and define
\begin{equation}
\label{eq:S}
S := \{z^{(1)}, \dots, z^{(M)}\} \subseteq \mathcal{X} \times \mathcal{C}.
\end{equation}
This set is used to construct the operational model.

Given the sampled function $\tilde{f}_{k,t-1} \sim \mathcal{GP}(\mathcal{D}_{k,t-1})$, we define the context-specific optimal design as
\[
\tilde{x}_{k,t-1}^*(c) = \arg\max_{x \in \mathcal{X}} \tilde{f}_{k,t-1}(x, c),
\]
and define the corresponding \textit{local exploration potential} as
\begin{equation}\label{eq:delta}
\Delta_{k,t}(c)
= \tilde{f}_{k,t-1}\big(\tilde{x}^*_{k,t-1}(c), c\big)
- \tilde{f}_{k,t-1}\big(x_{k,t-1}^*(c), c\big).
\end{equation}

The next context is selected as
\begin{equation}\label{eq:ct}
c_{k,t} = \arg\max_{c \in \mathcal{C}} \Delta_{k,t}(c),
\end{equation}
and the corresponding design is
\begin{equation}\label{eq:xt}
x_{k,t} = \tilde{x}_{k,t-1}^*(c_{k,t}).
\end{equation}
This sampling-based procedure implicitly performs joint optimization over design-context pairs, in a similar fashion to MTS, while remaining entirely client-specific.

While both schemes define decisions through a notion of potential improvement, they serve fundamentally different purposes. The collaborative scheme targets contexts where local and global recommendations disagree, aiming to correct inaccurate local estimates using cross-client information sharing. In contrast, the independent scheme uses posterior sampling to promote exploration within a client’s own model, without leveraging cross-client information.

Taken together, the proposed framework addresses two complementary challenges: (i) correcting inaccurate local estimates due to limited information and (ii) reducing uncertainty in the local surrogate model. By gradually decreasing $p_t$ over time, the method transitions from disagreement-driven, cross-client learning to individualized refinement of the optimal design.
 
\begin{figure*}[htbp]
    \centering
    \includegraphics[width=0.95\textwidth]{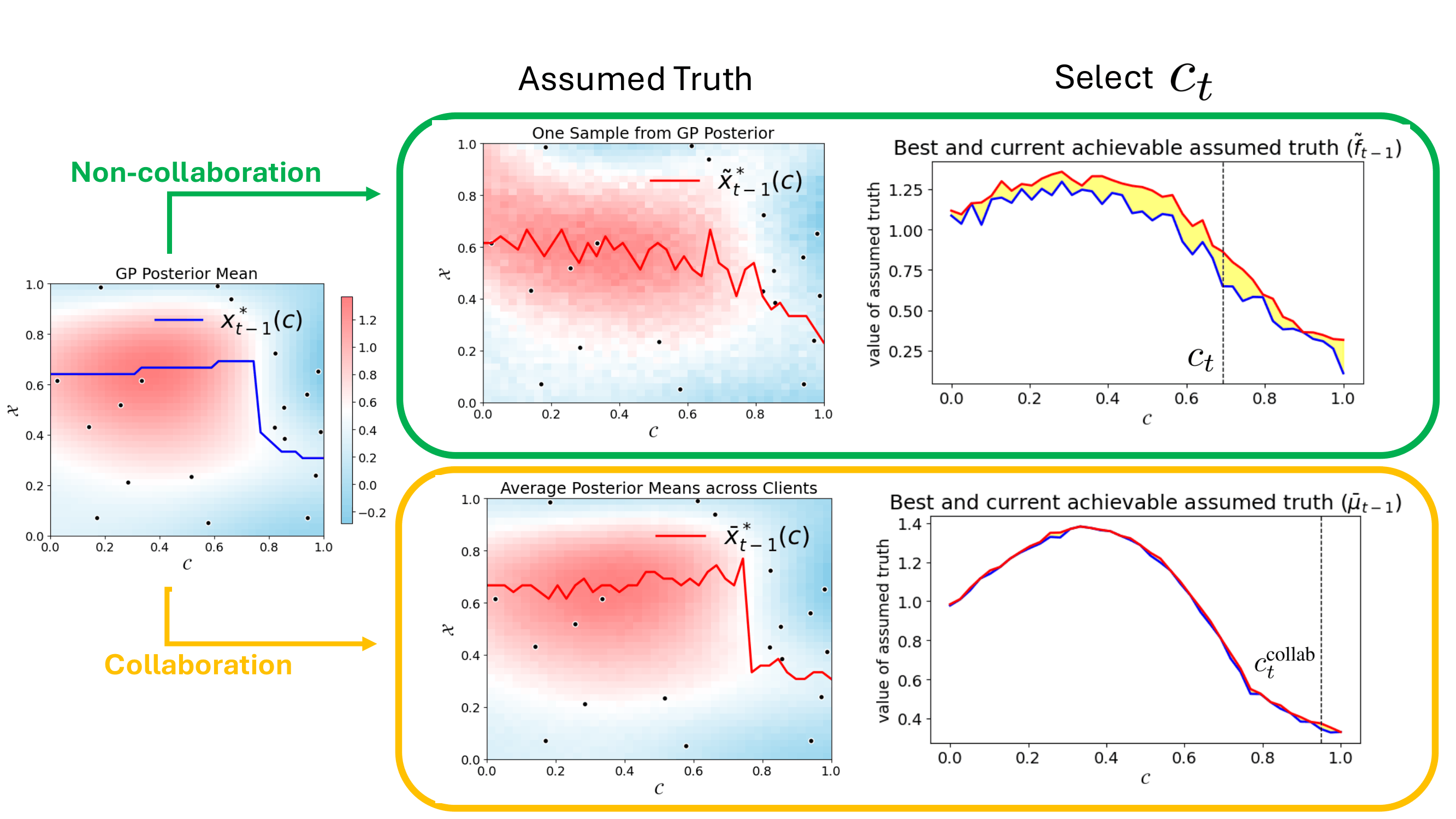}
    \caption{Graphical illustration of \texttt{CCBO}. The left, upper-middle, and lower-middle columns show the posterior mean, a posterior sample, and the averaged posterior mean across all clients, respectively. The colors represent the function values, and the curve connects the best design (shown on the $y$-axis) corresponding to each plot. The left column plots the relationship between the context and the response achieved by each method across all contexts. }
    \label{fig:posterior_comparison}
\end{figure*}

We summarize the proposed \texttt{CCBO} in Algorithm~\ref{alg:CCBO}. All optimization steps in Algorithm~\ref{alg:CCBO} are implemented over finite candidate sets as described above.

\paragraph{An illustrative example}

A graphical illustration of the proposed \texttt{CCBO} with $K=10$ is provided in Fig.~\ref{fig:posterior_comparison}. For clarity, we take the perspective of a single client $k$ and suppress the client subscript $k$ throughout this discussion.

The figure is organized into three columns, each corresponding to a key step in the decision-making process. The left column shows the current posterior belief: both the non-collaborative case (green box) and the collaborative case (orange box) share the same posterior belief $x_{t-1}^*(c)$ (blue curve) as their current estimate of the context-specific optimal design.

The two cases diverge in the second column, which represents \textit{what each client assumes to be the ground truth}. With probability $1-p_t$, the client 
adopts a posterior sample $\tilde{f}_{t-1}$ (upper-middle panel) as the assumed truth; with probability $p_t$, it instead uses the averaged posterior mean $\bar{\mu}_{t-1}$ (lower-middle panel). In either case, the red curve in the second column traces the context-specific optimal design induced by the assumed  truth, i.e., the design the client would choose at each context if the assumed truth were correct.

The third column then quantifies the \emph{discrepancy} between what the client could achieve under the assumed truth (red curve) and what it would actually obtain using its current belief $x_{t-1}^*(c)$ (blue curve). These gaps (colored in yellow) correspond to the local exploration potential $\Delta_{t-1}(c)$ (defined in~\eqref{eq:delta}) and the collaborative improvement potential $\Delta_{t-1}^{\text{collab}}(c)$ (defined in~\eqref{eq:deltaB}) in the upper and lower panels, respectively.

Finally, the client selects the next context as the one with the largest discrepancy (indicated by the vertical line in the third column), and evaluates the corresponding context-specific optimal design under the assumed truth as the next design. Intuitively, this selection targets the context where the current belief is most suboptimal, so that the new observation can yield the greatest potential improvement.

\paragraph{Offline collaborative initialization} A practically important special case of \texttt{CCBO} arises when other clients are unable to run new experiments, yet canshare their posterior means from historical observations. A single active client can then benefit from the collaborative branch of Algorithm~\ref{alg:CCBO} by treating $\bar{\mu}$ in~\eqref{eq:mubar} as \emph{fixed}, with each $\mu_{k}$ computed once from client $k$'s archived dataset. The disagreement-driven context selection in~\eqref{eq:ctb} and~\eqref{eq:xtb} then proceeds identically. This is particularly natural in manufacturing consortia, where peer machines have extensive historical records but are unavailable for coordinated experimentation. As the active client accumulates observations and $p_t$ decays, the algorithm transitions gracefully to fully independent operation via~\eqref{eq:ct} and~\eqref{eq:xt}.

\subsection{Privacy-Preserving Posterior Mean Sharing}\label{sec:qualify}

In the collaborative scheme, clients share their posterior means $\mu_{k,t-1}(x,c)$ to construct a global operational model $\bar{\mu}_{t-1}(x,c)$. While this does not require sharing raw observations $\mathcal{D}_{k,t}$, the posterior mean itself is learned from local data and may still reveal sensitive information. When privacy is a concern, we propose a mechanism that enables clients to communicate their posterior means in a compressed and less revealing form.

Specifically, we adopt a random Fourier features (RFF)-based representation \cite{rahimi2007random, dai2020federated} to approximate the posterior mean. Let $z=(x,c)\in\mathbb{R}^D$ denote the concatenated design-context input. The key idea is to approximate the kernel-induced function space using a randomized feature map, so that the posterior mean can be expressed approximately as a linear model in a fixed feature space.

Concretely, we draw random frequencies $\{\alpha_j\}_{j=1}^J$ from the spectral density of the kernel and random phases $\{\beta_j\}_{j=1}^J \sim \mathrm{Unif}[0,2\pi]$, and define the feature map
\[
\phi(z) = \sqrt{\tfrac{2}{J}}\big[\cos(\alpha_1^\top z + \beta_1), \ldots, \cos(\alpha_J^\top z + \beta_J)\big]^\top.
\]

Under this, the posterior mean is approximated as
\[
\mu_{k,t-1}(z) \approx \phi(z)^\top w_{k,t-1},
\]
where $w_{k,t-1} \in \mathbb{R}^J$ is a client-specific coefficient vector computed locally. To obtain $w_{k,t-1}$, each client fits this linear model to its own posterior mean evaluated on the candidate set $S$ defined in \eqref{eq:S} (one can also choose a different set $S_{\text{RFF}}\in\mathcal{X}\times\mathcal{C}$ for this approximation). We define: 
\[
\mathbf{m}_k = \big(\mu_{k,t-1}(z^{(1)}), \dots, \mu_{k,t-1}(z^{(M)})\big)^\top,
\]
\[
\mathbf{\Phi}_k = \big[\phi(z^{(1)}), \dots, \phi(z^{(M)})\big]^\top.
\]

The coefficient vector $w_{k,t-1}$ is then obtained via ridge regression:
\[
w_{k,t-1}
=
\arg\min_{w}
\|\mathbf{\Phi}_k w - \mathbf{m}_k\|_2^2 + \lambda \|w\|_2^2,
\]
which admits the closed-form solution
\begin{equation}
\label{eq:w}
w_{k,t-1}
=
(\mathbf{\Phi}_k^\top \mathbf{\Phi}_k + \lambda \mathbf{I})^{-1} \mathbf{\Phi}_k^\top \mathbf{m}_k.
\end{equation}

With this representation, collaboration can be implemented through a simple share-weights protocol:

\begin{enumerate}
    \item A central coordinator samples $\{\alpha_j, \beta_j\}_{j=1}^J$ and a shared input set $S$, and broadcasts them to all clients. This defines a common feature map $\phi(\cdot)$.
    
    \item Each client evaluates its posterior mean $\mu_{k,t-1}$ on $S$, computes $w_{k,t-1}$ using \eqref{eq:w}, and sends only this vector to the coordinator.

    \item The coordinator aggregates the weights as
    \[
    \bar{w}_{t-1} = \frac{1}{K} \sum_{k=1}^K w_{k,t-1}, \qquad
    \bar{\mu}_{t-1}(z) = \phi(z)^\top \bar{w}_{t-1},
    \]
    and broadcasts $\bar{w}_{t-1}$ (or equivalently $\bar{\mu}_{t-1}$) back to the clients.
\end{enumerate}

This approach replaces sharing a function (or many evaluations of it) with sharing a length-$J$ vector $w_{k,t-1}$, reducing communication cost and limiting information exposure.

Importantly, $w_{k,t-1}$ is a compressed representation of the posterior mean in a randomized feature space. No raw observations $\mathcal{D}_{k,t}$, kernel matrices, or explicit function values are transmitted. Moreover, reconstructing client-specific data from $(w_{k,t-1}, \phi)$ is an ill-posed inverse problem, since multiple datasets can induce similar representations. As such, this mechanism provides a practical way to enable collaboration while mitigating privacy concerns.

\begin{algorithm}[h]
\caption{\texttt{CCBO}}
\label{alg:CCBO}
\begin{algorithmic}[1]
\REQUIRE $T$, $K$, $\{p_t\}_{t=2}^T$ with $p_t \to 0$, initial data $\{\mathcal{D}_{k,0}\}_{k=1}^{K}$
\STATE Initialize: $p_1 \gets 1$
\FOR{$t = 1$ to $T$}
    \STATE \textbf{Step 1: Posterior update for each client}
    \FOR{$k = 1$ to $K$}
        \STATE Client $k$ updates its posterior belief $\mathcal{GP}(\mathcal{D}_{k,t-1})$
        \STATE Sample a realization $\tilde{f}_{k,t-1} \sim \mathcal{GP}(\mathcal{D}_{k,t-1})$
        \STATE Client $k$ releases $\mu_{k,t-1}$ to all clients
    \ENDFOR

    \STATE \textbf{Step 2: Select the next context and design}
    \FOR{$k = 1$ to $K$}
       \STATE Sample $s_{k,t} \sim \mathrm{Uniform}(0,1)$
        \IF{$s_{k,t} < p_t$}
            \STATE Set $c_{k,t}$ and $x_{k,t}$ according to \eqref{eq:ctb} and \eqref{eq:xtb}
        \ELSE
           \STATE Set $c_{k,t}$ and $x_{k,t}$ according to \eqref{eq:ct} and \eqref{eq:xt}
        \ENDIF
        \STATE Observe  $y_{k,t}$
        \STATE $\mathcal{D}_{k,t} = \mathcal{D}_{k,t-1} \cup \{(x_{k,t}, c_{k,t}, y_{k,t})\}$
    \ENDFOR
\ENDFOR
\end{algorithmic}
\end{algorithm}

\section{Theoretical Results}\label{sec:theres}

We focus on the performance of approximating $x_k^*(c)$ for the $k$-th client, and omit the client index when it is clear from the context. In this section, we establish a sublinear upper bound on the cumulative regret, which implies that the average regret vanishes as the number of iterations increases.

Despite the practical success of TS in both BO and CBO, its theoretical understanding in continuous settings remains limited. While the behavior of TS in bandit settings has been well studied \cite{russo2016information,agrawal2012analysis,agrawal2013thompson}, extending these results to continuous domains (i.e., BO) is more challenging and typically requires additional assumptions. A common approach is to discretize the domain when deriving theoretical guarantees \cite{dai2020federated,char2019offline}. We adopt this standard approach and introduce the following assumption.

\begin{assumption}\label{assump1}
The context space $\mathcal{C}$ and the design space $\mathcal{X}$ are finite, with $|\mathcal{C}|$ and $|\mathcal{X}|$ denoting the number of candidates in each space, respectively.
\end{assumption}

Under Assumption~\ref{assump1}, and considering the corresponding discretized algorithm, we define the following \emph{discretized regret} (also adopted in prior CBO work, e.g.,~\cite{char2019offline}):
\begin{equation}
\label{eq:disc_regret}
r_{k,t} \;=\; 
\frac{\displaystyle \sum_{c \in \mathcal{C}} \Big( f_k\big(x^{\text{best}}_k(c), c\big) - f_k\big({x}^{\text{best}}_{k,t}(c), c\big) \Big)}
{\displaystyle \sum_{c \in \mathcal{C}} \max_{x_1, x_2 \in \mathcal{X}} \Big\{ f_k(x_1, c) - f_k(x_2, c) \Big\}}\,,
\end{equation}
where we let $\mathcal{X}_{k,t}(c)\triangleq\{x_t:(x_t,y_t,c)\in \mathcal{D}_{k,t}\}$ and
\[
{x}^{\text{best}}_{k,t}(c) =
\begin{cases}
\underset{x\in\mathcal{X}_{k,t}(c)}{\arg\max}\; f_k(x,c), & \text{if } \mathcal{X}_{k,t}(c) \neq \emptyset,\\
\underset{x\in\mathcal{X}}{\arg\min}\; f_k(x,c), & \text{otherwise}.
\end{cases}
\]

This corresponds to the best observed design at context $c$, with a worst-case fallback when no observations are available. Under this discretized setting, we can derive an upper bound on the expected cumulative regret, as stated in Theorem~\ref{regbd}.

\begin{theorem}[Sublinear expected cumulative regret, proved in Appendix~\ref{app:pf1}]\label{regbd}
Under Assumption~\ref{assump1}, and assuming that $\sum_{t=1}^\infty t p_t < \infty$, the expected cumulative regret of a client after performing \texttt{CCBO} for $T$ iterations, defined as $R_T = \sum_{t=1}^T r_t$, satisfies $\mathbb{E}[R_T] = \mathcal{O}(\sqrt{\gamma_T T}).$
\end{theorem}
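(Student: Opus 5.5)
The plan is to split the cumulative regret according to which branch of Algorithm~\ref{alg:CCBO} is active at each iteration. I will then reduce the independent-branch contribution to the known analysis of multi-task Thompson sampling in \cite{char2019offline}.

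Let $B_t=\mathbb{1}\{s_t<p_t\}$ indicate that the collaborative rule \eqref{eq:ctb}--\eqref{eq:xtb} is used at iteration $t$. Since the denominator in \eqref{eq:disc_regret} dominates the numerator, we have $0\le r_t\le 1$. Hence
\[
\mathbb{E}[R_T]\le \sum_{t=1}^T \mathbb{E}\big[r_t(1-B_t)\big]+\sum_{t=1}^T p_t .
\]
The second sum is bounded by $\sum_t t p_t<\infty$, so it contributes $\mathcal{O}(1)$.

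\textbf{Step 1 (monotonicity).} The collaborative steps cannot increase the regret. Adding any observation $(x,c,y)$ to $\mathcal{D}_t$ only enlarges $\mathcal{X}_t(c)$, so $f(x_t^{\text{best}}(c),c)$ is nondecreasing in $t$ for every $c$, and $r_t$ is nonincreasing.

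\textbf{Step 2 (one-step bound on independent steps).} On the independent branch, $(c_t,x_t)$ is chosen exactly as in MTS: a posterior sample $\tilde f_{t-1}$ is drawn and the context maximizing the sampled gap \eqref{eq:delta} is evaluated. I would follow the Bayesian (Russo--Van Roy style) argument used for MTS.
\begin{enumerate}
\item Bound the one-step regret by $r_{t-1}$ times a constant proportional to $|\mathcal{C}|$, relating it to the instantaneous gap $\max_c\{f(x^*(c),c)-f(x^*_{t-1}(c),c)\}$.
\item Use the fact that, conditional on $\mathcal{D}_{t-1}$, the sample $\tilde f_{t-1}$ and the true $f$ are identically distributed. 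Together with UCB-type confidence bounds $U_{t-1}=\mu_{t-1}+\beta_t^{1/2}\sigma_{t-1}$, valid on the finite domain with $\beta_t=\mathcal{O}(\log(|\mathcal{X}||\mathcal{C}|t^2))$, this gives
\[
\mathbb{E}[\text{gap}_t\mid\mathcal{D}_{t-1}]\le \mathbb{E}\big[\beta_t^{1/2}\sigma_{t-1}(x_t,c_t)\mid\mathcal{D}_{t-1}\big]+\mathcal{O}(1/t^2).
\]
\end{enumerate}
The crucial point is that this inequality holds for whatever $\mathcal{D}_{t-1}$ is, including data gathered on collaborative steps. It therefore needs no assumption on $\bar\mu_{t-1}$, which is biased under heterogeneity.

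\textbf{Step 3 (summing).} Summing over independent steps gives
\[
\mathbb{E}[R_T]\lesssim \mathcal{O}(1)+\beta_T^{1/2}\,\mathbb{E}\sum_{t} \sigma_{t-1}(x_t,c_t).
\]
By Cauchy--Schwarz and the standard information-gain lemma, $\sum_t\sigma_{t-1}^2(z_t)\le C\gamma_T$ over all queried points. Extra (collaborative) points only add to the data, so the sum is still bounded by $\sqrt{T\cdot C\gamma_T}$. The logarithmic $\beta_T$ factor is absorbed into the $\mathcal{O}$ as in the statement, giving $\mathcal{O}(\sqrt{\gamma_T T})$.

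\textbf{Main obstacle.} The hardest part is Step 2, converting the normalized, best-observed discretized regret into the posterior-sample gap $\sum_t\mathbb{E}[\sigma_{t-1}(x_t,c_t)]$. In particular, one must check that the MTS argument of \cite{char2019offline} tolerates interleaved non-TS steps. I would try to handle this by conditioning on the full history $\mathcal{D}_{t-1}$, so that the probability-matching identity $\tilde f_{t-1}\overset{d}{=}f\mid\mathcal{D}_{t-1}$ remains valid regardless of how earlier points were chosen. I would also use the fact that $B_t$ is independent of $f$ given $\mathcal{D}_{t-1}$. If the direct reduction fails, the fallback is the cruder bound $t\,p_t$ per collaborative step, which is exactly why the summability condition $\sum_t t p_t<\infty$ is assumed.
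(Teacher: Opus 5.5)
Your opening decomposition $\mathbb{E}[R_T]\le\sum_t\mathbb{E}[r_t(1-B_t)]+\sum_t p_t$ and the monotonicity in Step~1 are correct. However, the inequality in Step~2, on which the rest of your argument rests, does not hold for this acquisition rule.

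\textbf{The one-step inequality fails.} Probability matching gives $(c_t,x_t)\overset{d}{=}(c^\star,x^*(c^\star))$ given $\mathcal{D}_{t-1}$, where $c^\star$ maximizes the true gap $f(x^*(c),c)-f(x^*_{t-1}(c),c)$. But this gap is measured against the baseline $x^*_{t-1}(c)=\arg\max_x\mu_{t-1}(x,c)$ at a context $c^\star$ that depends on $f$. That context is biased toward places where $f$ at the baseline came out low, so $\mathbb{E}[f(x^*_{t-1}(c^\star),c^\star)\mid\mathcal{D}_{t-1}]\neq\mathbb{E}[\mu_{t-1}(x^*_{t-1}(c_t),c_t)\mid\mathcal{D}_{t-1}]$. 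Doing the decomposition correctly, e.g.\ with the surrogate $U_{t-1}(x,c)-L_{t-1}(x^*_{t-1}(c),c)$ where $L_{t-1}=\mu_{t-1}-\beta_t^{1/2}\sigma_{t-1}$, leaves an extra term $\beta_t^{1/2}\,\mathbb{E}[\sigma_{t-1}(x^*_{t-1}(c_t),c_t)\mid\mathcal{D}_{t-1}]$. This is uncertainty at the posterior-mean maximizer, which is generally not the queried point $\tilde x^*_{t-1}(c_t)$, so the information-gain lemma in Step~3 cannot control its sum.

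For a concrete failure, take many contexts in which the posterior-mean maximizer is very uncertain while a slightly worse design is nearly known. The expected gap then grows like $\sqrt{\log|\mathcal{C}|}$, yet the rule almost always queries the nearly known design, so $\mathbb{E}[\sigma_{t-1}(x_t,c_t)]$ is tiny and your displayed bound is violated.

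\textbf{The regret is not dominated by the gap.} $r_{t-1}$ is defined through the best \emph{observed} design, which at an unsampled context is the worst design. So no gap measured against $x^*_{t-1}(c)$ dominates $r_{t-1}$; your item~1 silently swaps the two baselines. Conditioning on the full history only handles interleaving, which is the easy part.

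\textbf{The rate is off by a log factor.} $\beta_T^{1/2}=\Theta(\sqrt{\log T})$ is not a constant. Even a repaired version of this route gives $\mathcal{O}(\sqrt{\gamma_T T\log T})$, not the stated rate.

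\textbf{How the paper avoids this.} The paper never analyses Thompson sampling under interleaving. It sets $N=\sup\{t:B_t=1\}$, the last collaborative round, and charges the entire prefix to it. Since $r_t\le 1$, $\sum_{t\le N\wedge T}r_t\le N$, and the tail-sum formula plus a union bound give $\mathbb{E}[N]\le\sum_{n\ge1}\sum_{s\ge n}p_s=\sum_s s\,p_s$. This is exactly where $\sum_t t\,p_t<\infty$ is needed. Your main route would only use $\sum_t p_t<\infty$, which should signal that it claims more than the available tools deliver.

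Conditional on $N=n$ (the gate being independent of the data and noise), iterations $n+1,\dots,T$ are pure MTS. The MTS bound of \cite{char2019offline} is then applied as a black box with time re-indexed as $s=t-n$. Together with $\gamma_s\le\gamma_T$ and $\sum_{s\le T}s^{-1/2}\le 2\sqrt{T}$, this yields $\mathcal{O}(\log T)+\mathcal{O}(\sqrt{\gamma_T T})$.

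Your closing ``fallback'' gestures at the $t\,p_t$ charge. The missing idea is to turn it into a restart at the last collaborative round, so that a regret guarantee is needed only for the purely independent suffix.
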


The notation $\mathcal{O}(\cdot)$ characterizes the asymptotic growth rate up to a constant factor. Theorem~\ref{regbd} implies that the cumulative regret grows sublinearly in $T$ (up to the information-gain term $\gamma_T$). Consequently, the average regret satisfies
\[
\frac{\mathbb{E}[R_T]}{T} \to 0 \qquad \text{as } T \to \infty,
\]
indicating that the proposed method is no-regret in the long run. In other words, although the algorithm may incur nonzero regret during early exploration, its per-iteration performance approaches that of the context-specific optimal design as more data are collected.

The condition $\sum_{t=1}^\infty t p_t < \infty$ requires the exploration probability to decay sufficiently fast. One simple choice is to let $p_t = \mathcal{O}(t^{-(2+\epsilon)})$ for any $\epsilon > 0$, which guarantees convergence of the series and ensures that the theorem applies. In practice, however, we observe that the proposed method is robust to different choices of the decay schedule, including less aggressive decay such as $p_t = 1/\sqrt{t}$, which performs extremely well empirically.

\paragraph{Communication frequency}
In addition to the RFF approximation, \texttt{CCBO} naturally limits the frequency of communication as decisions become more independent over time. As established in Theorem~\ref{sharebd}, the frequency with which each client participates in posterior sharing decreases over time and vanishes asymptotically as $t \to \infty$.

\begin{theorem}[Sublinear posterior sharing for each client, proved in Appendix~\ref{app:thm2}] \label{sharebd}
Let $I_T$ denote the total number of bits communicated up to time $T$ by \texttt{CCBO} with RFF approximation. Suppose that $\sum_{t=1}^T p_t = \mathcal{O}(\sqrt{T})$. Then, for any $\delta > 0$, with probability at least $1 - \delta$, we have
\[
I_T = \mathcal{O}\left(DK\sqrt{T\log(1/\delta)}\right),
\]
without a central server, and
\[
I_T = \mathcal{O}\left(\min\left\{DK\sqrt{T\log(1/\delta)}, DT\right\}\right),
\]
with a central server.
\end{theorem}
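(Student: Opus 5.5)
The plan is to write $I_T$ as a sum over rounds and clients of per-round communication costs, each gated by the Bernoulli draw $s_{k,t}<p_t$, and then use a concentration inequality. Let $B_{k,t}=\mathbb{1}\{s_{k,t}<p_t\}$. These are independent across $k$ and $t$, since the uniforms are drawn afresh each round, and $\mathbb{E}[B_{k,t}]=p_t$. We interpret the algorithm so that client $k$ takes part in posterior sharing at round $t$ only when $B_{k,t}=1$. Under the RFF protocol, a participating client sends a length-$J$ weight vector $w_{k,t-1}$. Taking $J$, and the bits per coordinate, to scale with the input dimension $D$ as the statement implicitly does, each such message costs $\mathcal{O}(D)$ bits. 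So without a server, and up to the $\mathcal{O}(D)$ factor, the cost is $N_T:=\sum_{t=1}^T\sum_{k=1}^K B_{k,t}$.

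For the no-server case, I first bound the mean: $\mathbb{E}[N_T]=K\sum_{t\le T}p_t=\mathcal{O}(K\sqrt{T})$ by the hypothesis. For the deviation, note that $N_T$ is a sum of $KT$ independent $[0,1]$ variables. Hoeffding gives deviations of order $\sqrt{KT\log(1/\delta)}$, which is already within $K\sqrt{T\log(1/\delta)}$. A sharper alternative is the multiplicative Chernoff bound, $N_T\le \mathbb{E}N_T+\sqrt{3\,\mathbb{E}N_T\log(1/\delta)}+\mathcal{O}(\log(1/\delta))$. Either way, with probability at least $1-\delta$ we get $N_T=\mathcal{O}(K\sqrt{T\log(1/\delta)})$, absorbing constants and taking $\log(1/\delta)\ge 1$. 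Multiplying by $\mathcal{O}(D)$ bits per message gives the first claim. If instead each participating client must broadcast to all peers, one would pick up an extra factor, so I will adopt the accounting in which each client sends one message per participation.

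For the central-server case, the upper bound $DK\sqrt{T\log(1/\delta)}$ from the previous step still holds. The extra term $DT$ comes from the server aggregating. In each round it forms $\bar w_{t-1}$ and broadcasts a single $\mathcal{O}(D)$-bit vector, so whenever any client is active the round costs $\mathcal{O}(D)$. This has to be argued via the protocol: clients' weights are aggregated at the server, and the server pushes one aggregated vector per round. Summed over $T$ rounds, this caps the cost at $\mathcal{O}(DT)$ deterministically, and taking the minimum of the two bounds gives the claim.

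I expect the main obstacle to be the accounting conventions rather than the probability. One must fix what counts as a transmitted bit per round, namely $J=\mathcal{O}(D)$ with a fixed bit precision, and justify the $DT$ cap with a server by treating the uploads as aggregated per round. The concentration step itself is routine.
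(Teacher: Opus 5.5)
Your no-server argument works under the accounting you chose, but it takes a different route from the paper. You control the total count $N_T=\sum_{t\le T}\sum_{k\le K}B_{k,t}$ with one Hoeffding (or Chernoff) bound over $KT$ independent indicators and charge $\mathcal{O}(D)$ per participation, so the factor $K$ comes from summing over clients. The paper instead applies Azuma--Hoeffding to a single client's $T$ gates to get $\sum_{t\le T}B_{k,t}=\mathcal{O}(\sqrt{T\log(1/\delta)})$. It then obtains $K$ from the $\mathcal{O}(DK)$ cost of that client broadcasting to its $K-1$ peers. Your joint bound has the merit of covering all clients at once without a union bound.

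The genuine gap is the $DT$ term in the central-server case. You cap each round at $\mathcal{O}(D)$ by counting only the server's downlink of $\bar w_{t-1}$. Under your own accounting, however, each active client also uploads its own $\mathcal{O}(D)$-bit vector, so a round with $m$ active clients costs order $Dm$. These are $m$ separate messages from $m$ different clients. Aggregating them is exactly what the server does after receiving them, so they cannot be ``treated as aggregated per round.'' In a total-over-clients accounting the deterministic cap is only $\mathcal{O}(DKT)$, and the $DT$ term is in fact unattainable there. For $p_t=\min\{1,ct^{-1/2}\}$, the uploads alone concentrate around $DK\sum_{t\le T}p_t\asymp DK\sqrt{T}$, which is not $\mathcal{O}(DT)$ once $K\gg\sqrt{T}$.

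The paper gets $DT$ by counting per client, as the theorem's title indicates. A fixed client sends at most one $\mathcal{O}(D)$-bit vector to the server per round. Hence it sends at most $\mathcal{O}(DT)$ bits deterministically, and $\mathcal{O}(D\sqrt{T\log(1/\delta)})\le\mathcal{O}(DK\sqrt{T\log(1/\delta)})$ bits with probability at least $1-\delta$. Adopting this per-client accounting in both cases repairs your argument. It also removes the need for your one-message-per-broadcast convention in the no-server case, since there the factor $K$ comes from the peer-to-peer fan-out.
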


A sufficient choice is to let $p_t$ decay on the order of $t^{-1/2}$, for example, $p_t = \min\{1, c\,t^{-1/2}\}$ for some constant $c>0$. More generally, any schedule satisfying $p_t = \mathcal{O}(t^{-\alpha})$ with $\alpha \ge 1/2$ ensures $\sum_{t=1}^T p_t = \mathcal{O}(\sqrt{T})$, up to logarithmic factors in the boundary case $\alpha = 1/2$.

The quantity $I_T$ represents the total communication cost incurred up to time $T$. Thus, Theorem~\ref{sharebd} shows that the overall communication overhead grows sublinearly with the time horizon, rather than linearly in $T$. Equivalently, the average communication cost per iteration, $I_T/T$, vanishes as $T \to \infty$.

In practice, communication can be limited to iterations in which the collaborative scheme is selected (i.e., when the Bernoulli variable with probability $p_t$ triggers collaboration), leading to the sublinear communication cost described above.

The distinction between the cases with and without a central server arises from the communication architecture. Without a server, when one client shares its posterior information, it must be transmitted across all $K$ clients, leading to a communication cost that scales with both $D$ and $K$. With a central server, each client sends a $D$-dimensional representation of its posterior (which can be further reduced using the approach in Section~\ref{sec:qualify}) to the server, resulting in a lower communication cost per round. Consequently, the centralized architecture yields lower communication overhead than direct peer-to-peer sharing.

\begin{figure*}[t]
    \centering
    \begin{subfigure}[t]{0.32\textwidth}
        \includegraphics[width=\linewidth]{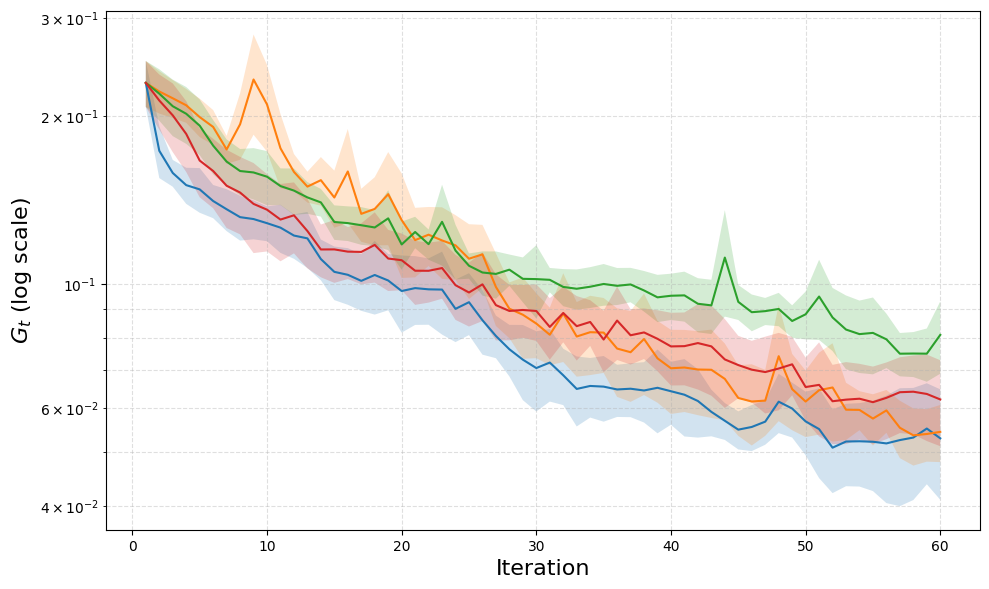}
        \caption{\texttt{ackley 2-1}}
    \end{subfigure}
    \hfill
    \begin{subfigure}[t]{0.32\textwidth}
        \includegraphics[width=\linewidth]{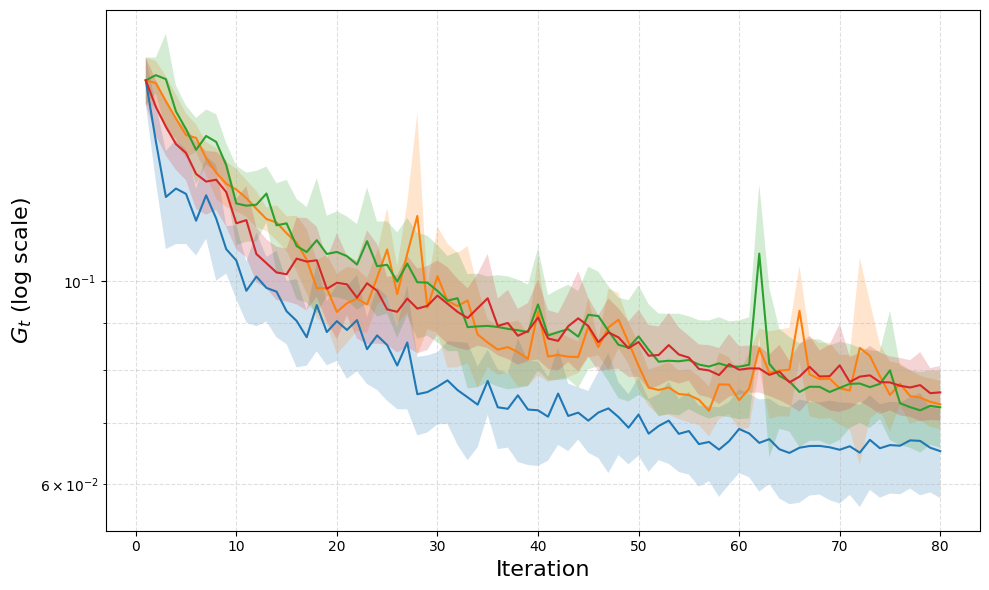}
        \caption{\texttt{ackley 2-2}}
    \end{subfigure}
    \hfill
    \begin{subfigure}[t]{0.32\textwidth}
        \includegraphics[width=\linewidth]{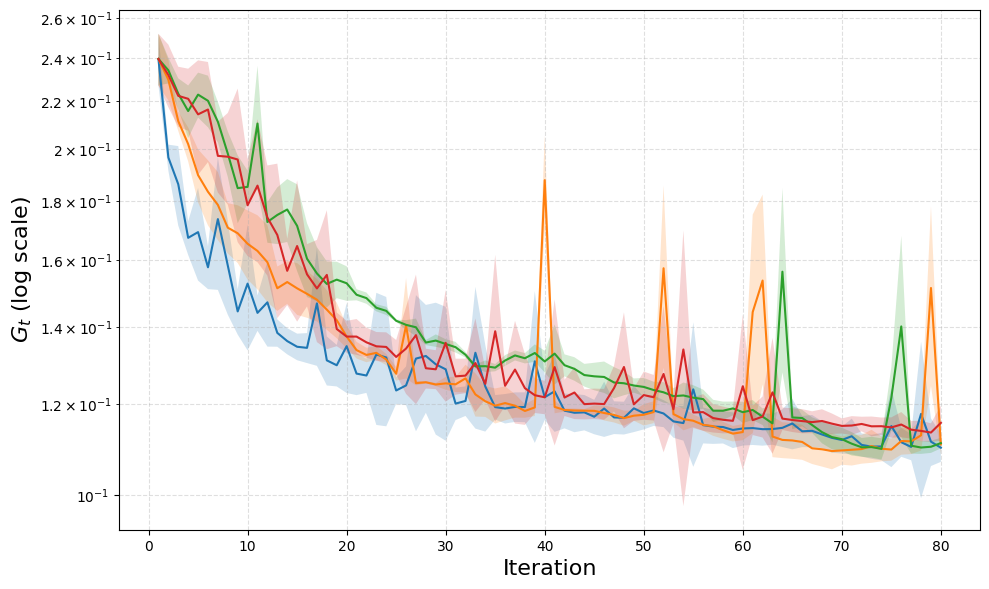}
        \caption{\texttt{ackley 1-3}}
    \end{subfigure}

    \caption{Comparison of regret $G_t$ in homogeneous settings using \texttt{ackley} function. 
    Each curve shows the mean regret with shaded 95\% confidence intervals. 
   \textbf{Methods}: 
    \solidline{Federated}{1.5em} FTS, 
    \solidline{Random}{1.5em} RS, 
    \solidline{orange}{1.5em} {MTS},
    \solidline{Independent}{1.5em} \texttt{CCBO}.}
    \label{fig:ackley}
\end{figure*}

\begin{figure*}[t]
    \centering
    \begin{subfigure}[t]{0.32\textwidth}
        \includegraphics[width=\linewidth]{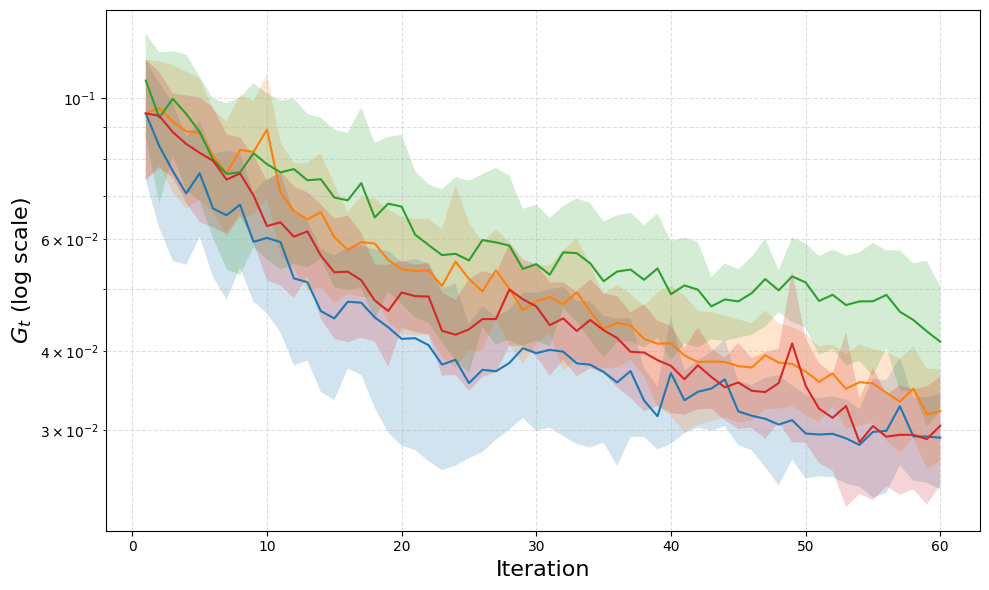}
        \caption{\texttt{levy 2-1}}
    \end{subfigure}
    \hfill
    \begin{subfigure}[t]{0.32\textwidth}
        \includegraphics[width=\linewidth]{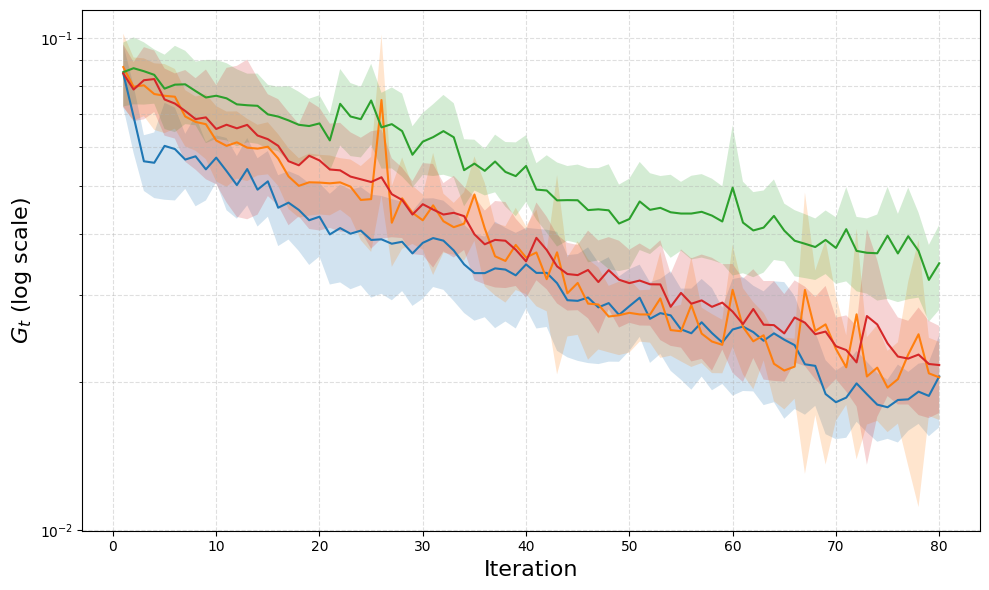}
        \caption{\texttt{levy 2-2}}
    \end{subfigure}
    \hfill
    \begin{subfigure}[t]{0.32\textwidth}
        \includegraphics[width=\linewidth]{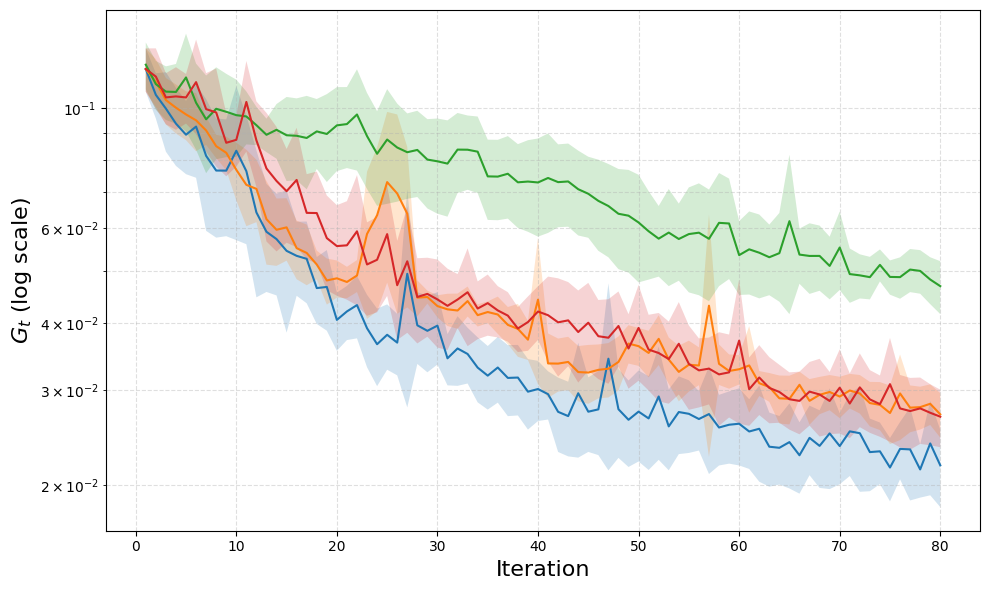}
        \caption{\texttt{levy 1-3}}
    \end{subfigure}

    \caption{Comparison of regret $G_t$ in heterogeneous settings using \texttt{levy} function. 
    Each curve shows the mean regret with shaded 95\% confidence intervals. 
    \textbf{Methods}: 
    \solidline{Federated}{1.5em} FTS, 
    \solidline{Random}{1.5em} RS, 
    \solidline{orange}{1.5em} {MTS},
    \solidline{Independent}{1.5em} \texttt{CCBO}
.}
    \label{fig:levy}
\end{figure*}

\section{Simulation Studies}\label{sec:sim}

In this section, we compare the performance of the proposed \texttt{CCBO} (Algorithm~\ref{alg:CCBO}) against existing baselines, including random sampling (RS) and independent multi-task TS (MTS) \cite{char2019offline}. We also include federated TS (FTS) \cite{dai2020federated}, where the next context is chosen at random in each iteration and FTS is then used to select the next design collaboratively. Let $D_x$ and $D_c$ denote the dimensions of the design space $x$ and the context space $c$, respectively. For numerical stability, we define both spaces on the unit hypercube, i.e., $[0,1]^{D_x}$ and $[0,1]^{D_c}$. The number of clients is set to 10. At each iteration, we randomly sample $100$ candidate designs and $100$ candidate contexts to form the discrete sets $\mathcal{X}$ and $\mathcal{C}$ used in that iteration for all clients, which is standard in CBO \cite{char2019offline}.

To evaluate the performance of each method over the continuous domains, we measure the log-scale overall simple regret. Specifically, $G_t$ is defined as the average, across all clients, of the normalized gap between the optimal design and the best design identified by the algorithm at iteration $t$, aggregated over the context space:
\begin{align*}
G_t \;=\; \frac{1}{K}\sum_{k=1}^K 
\frac{\int_{\mathcal{C}}\left(f_k(x_k^*(c),c)-f_k(x^*_{k,t}(c),c)\right)dc}
{\displaystyle \int_{\mathcal{C}}\left(\max_{x_1,x_2\in\mathcal{X}}  \left\{f_k(x_1,c)-f_k(x_2,c)\right\}\right)dc}.
\end{align*}
The normalization ensures that the regret is comparable across different functions and scales. 

Since evaluating these quantities exactly over continuous domains is intractable, we approximate $G_t$ by uniformly sampling $250$ context-design pairs $(x_i,c_i)$ from $\mathcal{X}\times\mathcal{C}$ and computing both the numerator and denominator using these samples. This resolution is empirically sufficient \cite{char2019offline}, and we adopt the same sampling scheme throughout all experimental evaluations. For each function with dimensions $D_x$ and $D_c$, we perform a total of $T=20(D_x+D_c)$ iterations, and the initial dataset for each client is set to $T_0=5(D_x+D_c)$.

All experiments are repeated 10 times to capture variability in the regret. Results are reported on a logarithmic scale to facilitate clear comparisons among methods. The benchmark functions considered include \texttt{ackley 2-1, 2-2, 1-3}, \texttt{levy 2-1, 2-2, 1-3}, and \texttt{hartmann 2-2}, all drawn from \cite{simulationlib}, where the suffix $D_c$--$D_x$ specifies the dimensionality of the context and design spaces, respectively. The details of the benchmark functions can be found in Appendix~\ref{appsim}. For all functions, we multiply the response by a negative sign to formulate them as maximization problems. The input space is normalized to the unit hypercube (i.e., $[0,1]^{D_x+D_c}$) to ensure fairness when considering heterogeneous cases.

For each benchmark function, we generate $1{,}000$ noiseless function evaluations at randomly sampled inputs from $\mathcal{X} \times \mathcal{C}$. We compute the empirical standard deviation of these values and use it to define the noise level. Specifically, the observed response is given by $y = f(z) + \epsilon$, where $\epsilon \sim \mathcal{N}(0, \sigma_\epsilon^2)$ and $\sigma_\epsilon = 0.1\,\hat{\sigma}_f$, with $\hat{\sigma}_f$ denoting the empirical standard deviation of the $1{,}000$ noiseless evaluations. This corresponds to a moderate noise level relative to the function variation. Throughout this section, we set $p_t = 1/\sqrt{t}$, which provides a balanced exploration-exploitation trade-off in practice.

\subsection{Homogeneous Settings}
We begin by demonstrating the effectiveness of the proposed method in the homogeneous case, where the response function is identical across all clients. The \texttt{ackley 2-1}, \texttt{ackley 2-2}, and \texttt{ackley 1-3} functions are used as the underlying response functions. The results of $G_t$ for all methods are shown in Fig.~\ref{fig:ackley}, from which several insights can be drawn. First, our method consistently outperforms all baselines across all settings. Second, compared to existing methods such as FTS, our approach substantially reduces regret in the early iterations, highlighting its ability to efficiently identify promising contexts and their corresponding optimal designs. This improvement is primarily due to effective information sharing across clients in early iterations, which accelerates the identification of high-quality regions of the design space.

\subsection{Heterogeneous Settings}
Next, we consider a more challenging scenario in which clients have similar yet distinct response functions. For each base function $f$, we define client $k$’s response function as
\begin{equation}\label{eq:hetero}
    f_k(x,c) = f(x + \xi^x_k, \, c + \xi^c_k),
\end{equation}
where $\xi^x_k$ and $\xi^c_k$ are vectors of length $D_x$ and $D_c$, respectively, with each element sampled from a uniform distribution between $-0.05$ and $0.05$. This introduces a random shift of up to 10\% in each dimension of the context and design spaces, since both spaces have been rescaled to the $[0,1]$ interval. The regret plots for the \texttt{levy 2-1}, \texttt{levy 2-2}, and \texttt{levy 1-3} functions are shown in Fig.~\ref{fig:levy}. It is noticeable that all methods outperform RS more significantly compared with the \texttt{ackley} setting. While MTS and FTS exhibit similar performance across most settings, our method consistently and significantly outperforms both benchmarks.

\begin{figure}[htbp]
    \centering
    \includegraphics[width=0.48\textwidth]{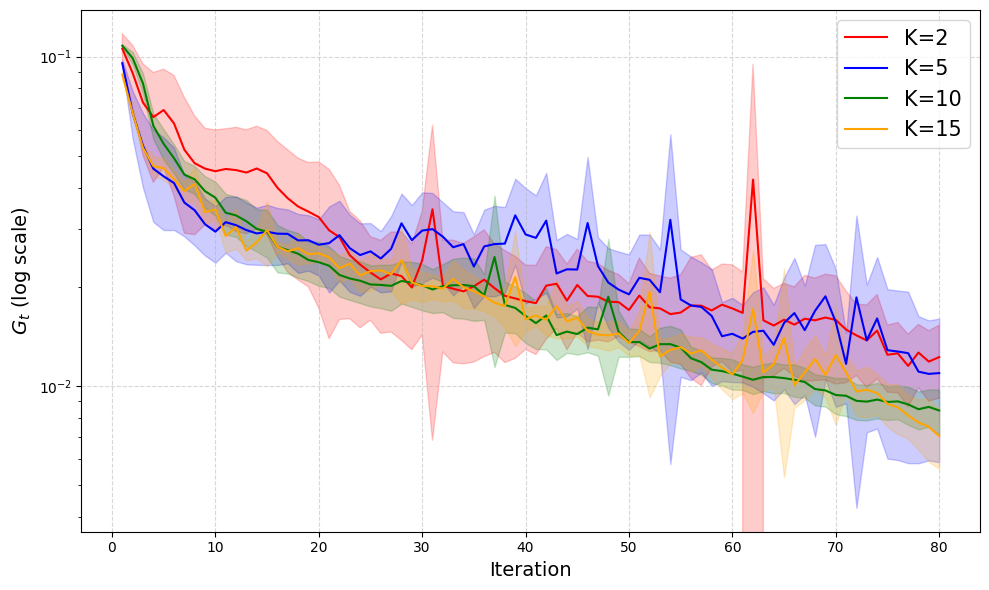}
    \caption{Comparison of regret $G_t$ with different number of clients using \texttt{hartmann 2-2} function in heterogeneous setting. Each curve shows the mean regret with shaded 95\% confidence intervals. }
    \label{fig:collaboration_comparison}
\end{figure}

\subsection{The Power of Collaboration}
To examine the benefit of collaboration, we employ \texttt{hartmann 2-2} as the ground-truth response function and generate heterogeneous response functions for each client according to (\ref{eq:hetero}). Different numbers of clients are used, with $K=2,\ 5,\ 10,\ 15$, while all other settings are kept the same as in Section~\ref{sec:sim}. The regret curves are shown in Fig.~\ref{fig:collaboration_comparison}. 

Several insights can be drawn. When the number of clients is small ($K=2$ and $5$), the performance is worse and the variability is higher. This is especially evident for $K=2$, where the ability to reduce regret is limited from the early iterations. In contrast, when the number of clients increases to 10, the performance improves substantially and becomes more stable. It is also noteworthy that while the final performance for $K=15$ is slightly better than for $K=10$, the two regret curves exhibit a similar trend.

\subsection{Cross Client Sharing with RFF Approximation}
\label{sec: fed}
As mentioned in Section~\ref{sec:qualify}, sharing the posterior mean can be achieved by transmitting the coefficients of its RFF approximation. This approach makes the sharing process more efficient and privacy-preserving, as it avoids the need to share the entire exact posterior mean. In this section, we evaluate the performance of transmitting the RFF coefficients and compare it against two baselines: sharing the posterior mean directly (i.e., \texttt{CCBO}) and the independent MTS method, which represents the current state-of-the-art.

We utilize the \texttt{Levy 1--3} functions under a heterogeneous setting with $J=50$ (the dimension of RFF basis), and all other configurations, including the number of repetitions, evaluations, $p_t$, $K$, $T$, and $T_0$, remain the same as those defined previously. The logarithmic regret plot is shown in Fig.~\ref{rffplot}. It is evident that our method, whether sharing the exact or approximate posterior across clients, consistently outperforms the independent MTS. Although sharing RFF coefficients may slightly degrade performance compared with \texttt{CCBO}, as $p_t$ decreases over time, both methods achieve comparable performance in the later stages. This highlights the advantage of extending \texttt{CCBO} through RFF approximation, which sacrifices a small amount of early-stage performance but makes the overall process more privacy-preserving and communication-efficient.

\begin{figure}[htbp]
    \centering
    \includegraphics[width=0.48\textwidth]{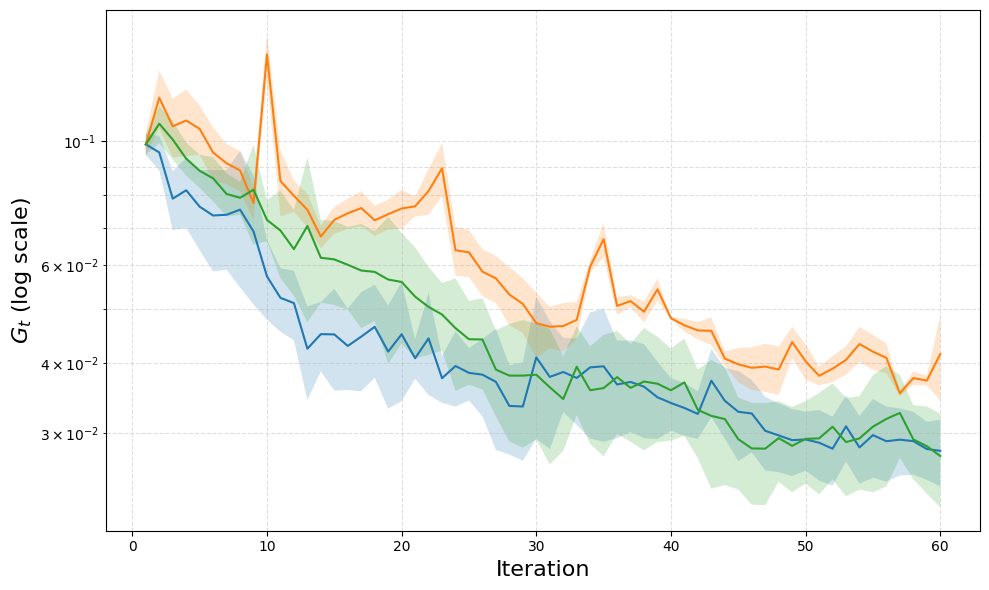}
    \caption{Comparison of regret $G_t$ in heterogeneous settings using \texttt{levy 1-3} function. 
    Each curve shows the mean regret with shaded 95\% confidence intervals. 
    \textbf{Methods}: 
    \solidline{Random}{1.5em} \texttt{CCBO} with RFF approximation, 
    \solidline{Independent}{1.5em} \texttt{CCBO},
    \solidline{orange}{1.5em} {MTS}.}
    \label{rffplot}
\end{figure}

\section{Application to Hot Rolling}\label{sec:app}
We consider a case study on optimizing grain size ($Z$) in a hot rolling process, a key indicator of material quality. The study is conducted using a physics-informed simulation framework that couples a thermal-mechanical finite element model with a grain evolution model to capture the dependence of grain size on processing conditions. 

\begin{figure}[htbp]
\vspace{-1em}
\centering
\includegraphics[width=0.48\textwidth]{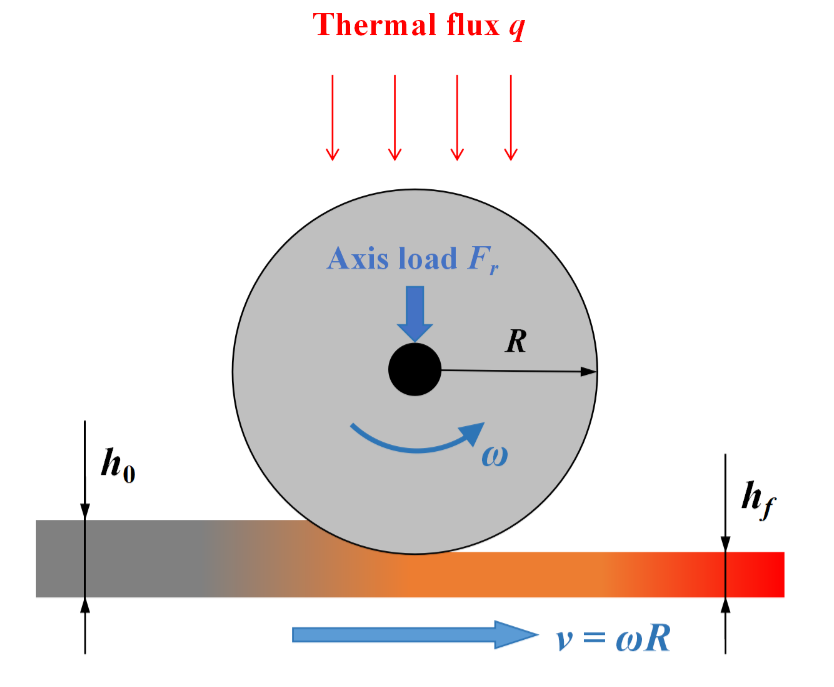}
\caption{Schematic of the roller-based hot rolling process simulated in COMSOL Multiphysics. The roller is characterized by its radius $R$, angular velocity $\omega$, and axial load $F_r$. The blank undergoes thickness reduction from initial thickness $h_0$ to final thickness $h_f$, while subjected to boundary thermal flux $q$ applied on the roller surface. The roller--blank interface exhibits coupled mechanical and thermal interactions, driving localized temperature rise and plastic deformation. The exit velocity of the workpiece is approximated by $V = \omega R$.}
\label{fig:rolling}
\end{figure}

As illustrated in Fig.~\ref{fig:rolling}, a detailed roller-sheet interaction model is constructed in COMSOL Multiphysics. These parameters jointly govern the deformation and thermal behavior during rolling. The parameter ranges are summarized in Table~\ref{tab:par}.

\begin{table}[ht]
\centering 
\caption{Range of each input parameter} 
\label{tab:par} 
\begin{tabular}{|l|c|c|} 
\hline 
\textbf{Parameter name} & \textbf{Min. value} & \textbf{Max. value}\\ 
\hline $R$ (m) & 0.4 & 0.6 \\ 
$\omega$ (rad/s) & 0.1 & 2.5  \\ 
$F_r$ ($\times 10^6$ N) & 0.1 & 1.0  \\ 
$q$ ($\times 10^6$ W/m$^2$) & 30 & 70 \\
$h_f$ (m) & 0.02 & 0.03  \\ \hline 
\end{tabular} 
\end{table}

Using COMSOL’s multiphysics solver, we simulate the transient temperature distribution across the sheet during the rolling process. We extract the average temperature in the mid-to-exit region of the contact zone, denoted $T_{\text{ave}}$, which serves as a critical input to the grain evolution model. This temperature reflects the balance among mechanical work input, heat conduction, interfacial heat transfer, and external boundary fluxes, and directly affects recrystallization and grain growth kinetics. The evolution of grain size is then modeled in MATLAB using a physically informed formulation that accounts for both deformation-induced recrystallization and thermally activated grain growth. The simulator maps inputs $(\omega, F_r, q, h_f, R)$ to grain size $Z(\omega, F_r, q, h_f, R)$. Further details of the grain size simulation are provided in Appendix~\ref{appreal}.

\begin{figure}[htbp]
\centering
\includegraphics[width=0.48\textwidth]{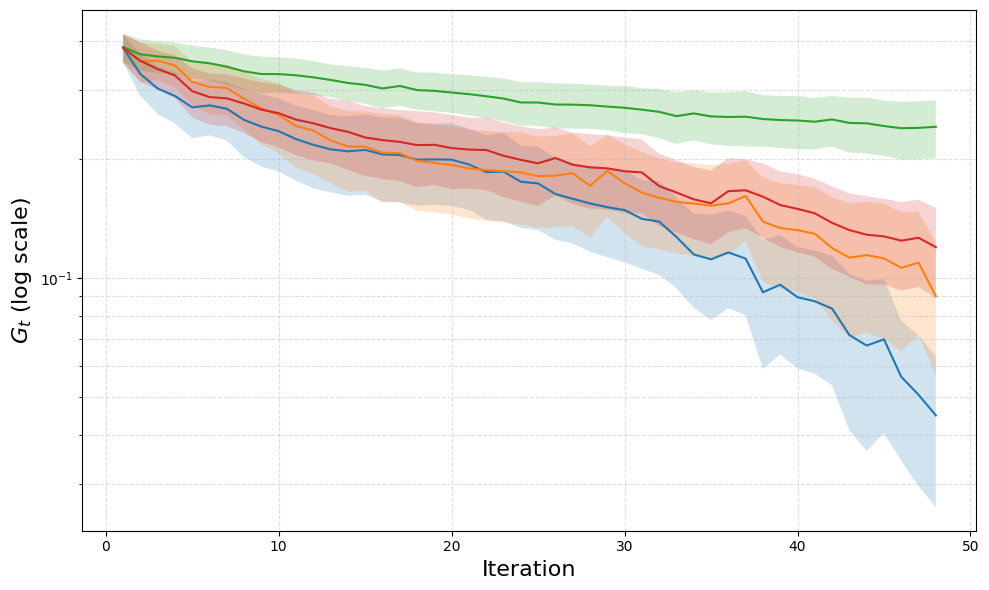}
\caption{Comparison of regret $G_t$ in heterogeneous settings in the hot rolling experiment. Each curve shows the mean regret with shaded 95\% confidence intervals. 
    \textbf{Methods}: 
    \solidline{Federated}{1.5em} FTS, 
    \solidline{Random}{1.5em} RS, 
    \solidline{orange}{1.5em} {MTS},
    \solidline{Independent}{1.5em} \texttt{CCBO}.}
\label{fig:rollingres}
\end{figure}

We aim to characterize how the optimal process parameters vary with the final sheet thickness. Here, the thickness $h_f$ defines the context, while variability in the roller radius $R$ induces client-level heterogeneity. We generate $K$ clients by sampling $\{R_k\}_{k=1}^K \sim \mathcal{N}(0.5, 0.1)$ and treating each $R_k$ as fixed. The problem is thus formulated as a controllable-context BO (CBO) task with design variable $x=(\omega, F_r, q)$, context $c=h_f$, and response function $f_k(x,c)=Z(\omega,F_r,q,h_f,R_k)$. The objective is to learn the context-specific optimal design
\[
x_k^*(c) = \arg\max_{\omega, F_r, q} Z(\omega, F_r, q, h_f, R_k).
\]
To improve computational efficiency, we generate $10{,}000$ samples from the simulator and train a neural network (NN), which is used as a surrogate oracle during optimization. As in Section~\ref{sec:sim}, observations are perturbed with Gaussian noise, where the noise level is scaled relative to the empirical variability of the simulator outputs. All input parameters are normalized to the unit cube to improve numerical stability when fitting the $\mathcal{GP}$.

We consider $K=10$ heterogeneous clients. The number of iterations is set to $T=50$, and each experiment is repeated $30$ times. All other settings, including regret evaluation, the schedule of $p_t$, and initial data generation, remain consistent with those in Section~\ref{sec:sim}. Fig.~\ref{fig:rollingres} illustrates the regret calculated using $G_t$.

It is evident that random sampling fails to approach the optimal design. While FTS leverages information from other clients, its performance remains comparable to MTS. In contrast, \texttt{CCBO} reduces regret more effectively in the early iterations and ultimately achieves the lowest regret among all methods. This demonstrates the effectiveness of the proposed approach in a realistic manufacturing setting and highlights the benefit of collaboration in learning context-specific optimal process parameters.

\section{Discussion and Conclusion}\label{sec:con}

We studied the problem of learning context-specific optimal designs and proposed a collaborative contextual Bayesian optimization framework that enables multiple clients to jointly approximate the mapping $x^*(c)$. By combining cross-client information sharing with adaptive switching between collaborative and independent decisions, the approach improves sample efficiency while preserving the ability to specialize to client-specific behavior. We establish theoretical guarantees with sublinear regret and demonstrate strong empirical performance across benchmark functions and a realistic hot rolling application.

Several directions remain open for future work. A deeper understanding of optimal regret rates in contextual settings would provide useful benchmarks for algorithm design. In addition, developing more adaptive and robust collaboration mechanisms, particularly under strong heterogeneity or limited communication, is an important direction for improving scalability. Overall, this work highlights the potential of collaborative approaches for enabling efficient and scalable contextual optimization in complex real-world systems.

\section*{Acknowledgments}
This work was supported by the National Science Foundation under Grant CMMI-2328010.

\bibliography{Ref}
\bibliographystyle{icml2026}

%%%%%%%%%%%%%%%%%%%%%%%%%%%%%%%%%%%%%%%%%%%%%%%%%%%%%%%%%%%%%%%%%%%%%%%%%%%%%%%
%%%%%%%%%%%%%%%%%%%%%%%%%%%%%%%%%%%%%%%%%%%%%%%%%%%%%%%%%%%%%%%%%%%%%%%%%%%%%%%
% APPENDIX
%%%%%%%%%%%%%%%%%%%%%%%%%%%%%%%%%%%%%%%%%%%%%%%%%%%%%%%%%%%%%%%%%%%%%%%%%%%%%%%
%%%%%%%%%%%%%%%%%%%%%%%%%%%%%%%%%%%%%%%%%%%%%%%%%%%%%%%%%%%%%%%%%%%%%%%%%%%%%%%

\newpage
\appendix
\onecolumn
\section{Technical Results} 
\subsection{Proof for Theorem~\ref{regbd}}\label{app:pf1}
Let $X_t \sim \text{Bernoulli}(p_t)$ denote the switching variable at 
iteration $t$, where $X_t = 1$ indicates that the collaborative scheme is selected 
and $X_t = 0$ indicates the independent scheme. We assume that $\{X_t\}_{t \geq 1}$ 
are independent across $t$, and that each $X_t$ is independent of the observation 
noise $\{\epsilon_{k,t}\}$ and of the posterior updates $\{\mathcal{GP}(\mathcal{D}_{k,t})\}$. 
This independence holds by construction, since the Bernoulli gate is drawn before 
the design--context pair is selected and before the response is observed.
\begin{lemma}[Expected last occurrence time]
\label{lemma:upperN}
Let $\{X_t\}_{t\ge 1}$ be a sequence of independent 
$\{0,1\}$-valued random variables with $\mathbb{P}(X_t = 1) = p_t$ for each $t \ge 1$, where $X_t = 1$ 
indicates that the collaborative scheme is used at time $t$, and $X_t=0$ otherwise. Define the random variable
\[
N:=\sup\{t\ge 1:X_t=1\},
\]
with the convention $N=0$ if $X_t=0$ for all $t\ge 1$. If $\sum_{t=1}^\infty t\,p_t<\infty,$ then $N<\infty$ almost surely and $\mathbb{E}[N]\le \sum_{t=1}^\infty t\,p_t.$
\end{lemma}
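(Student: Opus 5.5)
The plan is to control $N$ through the tail events $\{N \ge t\}$. The key observation is that $N \ge t$ forces some $s \ge t$ with $X_s = 1$. A union bound then gives
\[
\mathbb{P}(N \ge t) \le \mathbb{P}\Big(\bigcup_{s \ge t}\{X_s = 1\}\Big) \le \sum_{s \ge t} p_s .
\]
Independence of the $X_t$ is not even needed for this step, only the marginals $\mathbb{P}(X_t=1)=p_t$.

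For almost sure finiteness, the hypothesis $\sum_t t p_t < \infty$ implies $\sum_t p_t < \infty$. The first Borel--Cantelli lemma then shows that only finitely many $X_t$ equal $1$ almost surely, so $N<\infty$ a.s. Equivalently, $\mathbb{P}(N=\infty) \le \lim_{t\to\infty} \sum_{s\ge t} p_s = 0$.

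For the expectation, I will use the tail-sum formula for a nonnegative integer-valued random variable, $\mathbb{E}[N] = \sum_{t=1}^\infty \mathbb{P}(N \ge t)$. This is valid once we know $N<\infty$ a.s. (or with the value $+\infty$ allowed). Inserting the union bound and exchanging the order of summation, which Tonelli permits since all terms are nonnegative, gives
\[
\mathbb{E}[N] \le \sum_{t=1}^\infty \sum_{s \ge t} p_s = \sum_{s=1}^\infty p_s \cdot \#\{t : 1 \le t \le s\} = \sum_{s=1}^\infty s\,p_s .
\]
This is exactly the claimed bound.

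There is no serious obstacle. The only points needing care are the convention $N=0$ when no $X_t$ equals $1$, which is consistent with the tail formula, and the justification for swapping the sums. A more direct alternative uses $N \le \sum_t t\,\mathbf{1}\{X_t=1\}$, which holds since $N$ is one of the indices with $X_t=1$. Taking expectations immediately gives $\mathbb{E}[N] \le \sum_t t p_t$, and finiteness of this expectation also yields $N<\infty$ a.s. I would likely present this one-line argument as the main proof and mention Borel--Cantelli as a remark.
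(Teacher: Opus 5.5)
Your proposal is correct. Your first argument is exactly the paper's proof: Borel--Cantelli for almost sure finiteness, then the tail-sum formula, the union bound $\mathbb{P}(N\ge n)\le\sum_{s\ge n}p_s$, and an interchange of summation.

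The one-line alternative you propose to lead with is a genuinely different and cleaner route. The pointwise bound $N\le\sum_{t\ge 1} t\,\mathbf{1}\{X_t=1\}$, together with Tonelli, gives $\mathbb{E}[N]\le\sum_t t\,p_t$ directly. Almost sure finiteness then follows from integrability, so neither Borel--Cantelli nor the tail-sum formula is needed. As you note, neither route uses independence.

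One point needs care. Your justification, ``$N$ is one of the indices with $X_t=1$'', covers only $1\le N<\infty$; the case $N=0$ is trivial. For the finiteness conclusion not to be circular, you must also check the inequality on $\{N=\infty\}$. There it holds because infinitely many $X_t$ equal $1$, so the right-hand side diverges. With that, the pointwise inequality holds everywhere in $[0,\infty]$ and the argument is complete.

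The two routes give literally the same bound: after the interchange, the paper's double sum $\sum_s s\,p_s$ equals $\mathbb{E}\sum_t t\,\mathbf{1}\{X_t=1\}$. Yours makes it transparent why the weight $t$ appears. It also makes the paper's preliminary finiteness step (``to show that $\mathbb{E}[N]$ exists'') unnecessary.
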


\begin{proof}
The proof is divided into two steps. First, we show that $N<\infty$ almost surely (to show that $\mathbb{E}[N]$ exists). Second, we derive an upper bound for $\mathbb{E}[N]$. Observe that
\[
\sum_{t=1}^\infty \mathbb{P}(X_t=1)=\sum_{t=1}^\infty p_t
\le \sum_{t=1}^\infty t\,p_t<\infty.
\]
Since this series converges, the first Borel--Cantelli lemma \cite{feller1971introduction} implies that
\[
\mathbb{P}\bigl(\limsup_{t\to\infty}\{X_t=1\}\bigr)=0.
\]
Equivalently, with probability one, only finitely many indices $t$ satisfy $X_t=1$. Therefore, $N=\sup\{t\ge 1:X_t=1\}<\infty$ almost surely. 

Next, since $N$ is a nonnegative integer-valued random variable, the tail-sum formula gives
\[
\mathbb{E}[N]=\sum_{n=1}^\infty \mathbb{P}(N\ge n).
\]
For each $n\ge 1$, the event $\{N\ge n\}$ means that there exists some time $s\ge n$ such that $X_s=1$. Thus
\[
\{N\ge n\}=\bigcup_{s=n}^\infty \{X_s=1\}.
\]
Applying the union bound, we obtain
\[
\mathbb{P}(N\ge n)
=\mathbb{P}\Bigl(\bigcup_{s=n}^\infty \{X_s=1\}\Bigr)
\le \sum_{s=n}^\infty \mathbb{P}(X_s=1)
=\sum_{s=n}^\infty p_s.
\]
Substituting this into the tail-sum formula yields
\[
\mathbb{E}[N]\le \sum_{n=1}^\infty \sum_{s=n}^\infty p_s
=\sum_{s=1}^\infty \sum_{n=1}^s p_s
=\sum_{s=1}^\infty s\,p_s.
\]
The interchange of summation order is justified since 
all terms $p_s \ge 0$. Explicitly,
$\sum_{s=1}^\infty \left(\sum_{n=1}^s 1\right) p_s
= \sum_{s=1}^\infty s\, p_s$.
\end{proof}

Lemma ~\ref{lemma:upperN} shows that if the probability of using the collaborative scheme decreases quickly enough as time increases, then the scheme will eventually stop being used almost surely. Moreover, the expected time of its final use is finite, and can be bounded by $\sum_{t=1}^\infty t\,p_t$.

The following lemma provides an upper bound on the regret of MTS.

\begin{lemma}\label{lemma1} (Theorem~1 in \cite{char2019offline})
Let $\gamma_t=\max_{k}\max_{|\mathcal{D}_{k,t}|=t}I\!\left(\mathbf{y}_{k}; \mathbf{f}_k \mid \mathcal{D}_{k,t}\right)$ denote the maximum information gain at time $t$. Under Assumptions~\ref{assump1}, the expected instantaneous regret at time $t$ admits the following upper bound:
\[\label{eq:lemma1}
\mathbb{E}\!\left[r_t\right]
\;\leq\;
|\mathcal{C}|\!\left(\frac{1}{t}
+
\sqrt{\frac{|\mathcal{X}||\mathcal{C}|\gamma_t}{2t}}\right).
\]
\end{lemma}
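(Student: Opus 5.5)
The plan is to reproduce the Bayesian information-ratio argument of Russo and Van Roy, adapted to the multi-task Thompson sampling rule \eqref{eq:delta}--\eqref{eq:xt}, which is the independent branch of \texttt{CCBO}. Throughout I treat $f$ as a draw from the $\mathcal{GP}$ prior, so that $\mathbb{E}$ is over the prior, the noise and the sampling randomness. Under Assumption~\ref{assump1} the joint action set $\mathcal{X}\times\mathcal{C}$ is finite with $|\mathcal{X}||\mathcal{C}|$ elements.

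\textbf{Step 1: Reduce the discretized regret to per-round gaps.} I will show that the numerator of \eqref{eq:disc_regret} is controlled by an average, over earlier rounds $s\le t$, of the per-round gaps $\sum_{c}\bigl(f(x^*(c),c)-f(x^*_{s-1}(c),c)\bigr)$, where $x^*_{s-1}(c)$ is the incumbent local maximizer of $\mu_{s-1}$. The idea is that the best observed design at $c$ is at least as good as anything the algorithm would recommend once that context has been sampled.

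Contexts never sampled contribute the worst-case fallback. After normalization by the denominator, the rounds where the normalized regret is trivially at most one account for the additive $\frac{1}{t}$ term. I also pull out the factor $|\mathcal{C}|$ here by bounding the sum over contexts by $|\mathcal{C}|$ times the largest per-context gap.

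\textbf{Step 2: Posterior-sampling identity.} Conditioned on $\mathcal{D}_{s-1}$, the sample $\tilde f_{s-1}$ has the same law as $f$, and $x^*_{s-1}(c)$ is $\mathcal{D}_{s-1}$-measurable. Hence
\[
\mathbb{E}_{s-1}\bigl[\max_c \bigl(f(x^*(c),c)-f(x^*_{s-1}(c),c)\bigr)\bigr]=\mathbb{E}_{s-1}\bigl[\Delta_s(c_s)\bigr],
\]
where $\Delta_s$ is the local exploration potential \eqref{eq:delta} and $c_s$ is its maximizer \eqref{eq:ct}. Replacing $\tilde f_{s-1}$ evaluated at $(\tilde x^*_{s-1}(c_s),c_s)$ and at $(x^*_{s-1}(c_s),c_s)$ by posterior quantities, the gap becomes the usual TS instantaneous regret on the finite action set. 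The queried point $(x_s,c_s)$ plays the role of the TS action, and $\mathcal{D}_{s-1}$-measurable terms cancel.

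\textbf{Step 3: Information ratio and summation.} Using the Russo--Van Roy bound on the information ratio for TS over $A=|\mathcal{X}||\mathcal{C}|$ actions, I get
\[
\mathbb{E}_{s-1}[\Delta_s(c_s)]\le\sqrt{\tfrac{A}{2}\,I_{s-1}\bigl(a^*;(a_s,y_s)\bigr)}.
\]
By Cauchy--Schwarz and the chain rule for mutual information, the sum over $s\le t$ is at most $\sqrt{\tfrac{A}{2}\,t\,\gamma_t}$. Dividing by $t$ as in Step 1 and multiplying by $|\mathcal{C}|$ gives $|\mathcal{C}|\sqrt{|\mathcal{X}||\mathcal{C}|\gamma_t/(2t)}$.

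\textbf{Main obstacle.} I expect Step 2 to be the hardest step. The MTS action is not a plain argmax of $\tilde f$ over $\mathcal{X}\times\mathcal{C}$: it maximizes a difference against the incumbent. So I must verify that the one-dimensional information-ratio argument (Cauchy--Schwarz over actions with $A$ terms) still applies, by rewriting $\Delta_s(c)$ as $\tilde f$ minus a $\mathcal{D}_{s-1}$-measurable baseline. A second subtlety is that the mutual information involves the optimal pair $a^*$ rather than $\mathbf f$. Bounding it by $I(\mathbf y;\mathbf f)\le\gamma_t$ via data processing should close the argument.
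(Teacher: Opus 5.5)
The paper does not prove this lemma; it imports it as Theorem~1 of \cite{char2019offline}. Your Russo--Van Roy reconstruction is a reasonable template, but two steps fail as written.

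\textbf{Step~1 is not true.} In \eqref{eq:disc_regret}, $x^{\text{best}}_t(c)$ maximizes the true $f(\cdot,c)$ only over designs actually queried at $c$, with the worst design as fallback. The incumbent $x^*_{s-1}(c)=\arg\max_x\mu_{s-1}(x,c)$ is typically not one of those designs. So there is no inequality $f(x^{\text{best}}_t(c),c)\ge f(x^*_{s-1}(c),c)$, and at an unvisited context it fails outright.

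The monotonicity that does support an averaging-over-$s\le t$ argument is Bayesian and concerns the incumbent recommendation. Since $\max_x\mu_s(x,c)$ is a submartingale, $\mathbb{E}[\max_x f(x,c)-f(x^*_s(c),c)]$ is nonincreasing in $s$. That bounds the regret of the posterior-mean recommendation, not the best-observed regret in \eqref{eq:disc_regret}.

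Your account of the $1/t$ term also fails. MTS does not force every context to be visited. A context already well explained through kernel correlations can have small $\Delta_s$ and be skipped for a long time, while its best-observed regret stays at the full fallback gap. So unvisited contexts are not an $\mathcal{O}(1/t)$ effect. Separately, $r_t$ has an $f$-dependent denominator, so bounds on the expected numerator do not pass directly to $\mathbb{E}[r_t]$.

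\textbf{The fix you propose for your \emph{main obstacle} is not available.} The baseline in \eqref{eq:delta} is $\tilde f_{s-1}(x^*_{s-1}(c),c)$, the \emph{sampled} value at the incumbent. Its location is $\mathcal{D}_{s-1}$-measurable, but its value is not.

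Your Step~2 identity is correct. However, it makes $a_s=(x_s,c_s)$ a posterior draw of $a^\sharp=\arg\max_{(x,c)}\{f(x,c)-f(x^*_{s-1}(c),c)\}$. Write $g(x,c)=f(x,c)-f(x^*_{s-1}(c),c)$. In the decomposition $\sum_a\mathbb{P}(a^\sharp=a)\bigl(\mathbb{E}_{s-1}[g(a)\mid a^\sharp=a]-\mathbb{E}_{s-1}[g(a)]\bigr)$, each shift contains a term for $f$ at the incumbent point, not at the played point.

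Pinsker/transport at the played action controls only shifts in the mean of $y$ at that action. The incumbent is played only on the degenerate event $\Delta_s\equiv 0$. Hence these terms are not bounded by $I_{s-1}(a^\sharp;(a_s,y_s))$, and the information ratio $|\mathcal{X}||\mathcal{C}|/2$ does not follow.

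Substituting $\mu_{s-1}$ for the sampled incumbent value inside the expectation does not give an upper bound either. The context $c_s$ is selected precisely where that sampled value is low, which biases the substitution the wrong way.

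Your chain does close for the variant with a measurable baseline, $\max_x\tilde f_{s-1}(x,c)-\max_x\mu_{s-1}(x,c)$, with regret measured at the incumbent. Even then the information-ratio constant is the sub-Gaussian one, not the $[0,1]$ constant $1/2$. For the rule and regret actually defined in this paper, you would need an additional ingredient, for example control of $\mathbb{E}_{s-1}\max_c\bigl(\mu_{s-1}-\tilde f_{s-1}\bigr)(x^*_{s-1}(c),c)$. The information-gain argument does not provide this.
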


Lemma~\ref{lemma1} provides the key ingredient 
for bounding the regret once collaboration ceases. 
Combined with Lemma~\ref{lemma:upperN}, which bounds 
the cost of the collaborative phase, we can now establish the full regret bound for \texttt{CCBO}.

\begin{proof}[Proof of Theorem~\ref{regbd}]
We condition on the realization of the switching sequence $\{X_t\}_{t=1}^T$, which is independent of the stochastic observations $\{y_{k,t}\}$ and posterior updates by the independence assumption stated above. Let $N$ be the last time 
the collaborative scheme is invoked, as defined in Lemma~\ref{lemma:upperN}. By 
Lemma~\ref{lemma:upperN}, $N < \infty$ almost surely under 
the assumption $\sum_{t=1}^\infty t p_t < \infty$. We decompose the regret into two parts: the regret accumulated up to the last collaborative round, and the regret accumulated afterwards. More precisely,
\[
R_T
=
\sum_{t=1}^{N\wedge T} r_t
+
\sum_{t=N\wedge T+1}^{T} r_t.
\]
Taking expectations gives
\[
\mathbb{E}[R_T]
=
\mathbb{E}\!\left[\sum_{t=1}^{N\wedge T} r_t\right]
+
\mathbb{E}\!\left[\sum_{t=N\wedge T+1}^{T} r_t\right].
\]

We first bound the regret incurred before time $N$. By definition of $r_t$ in~\eqref{eq:disc_regret}, for 
each context $c \in \mathcal{C}$ and any ${x}^{\text{best}}_{k,t}(c)$,
\[
f_k\bigl(x_k^{\text{best}}(c),c\bigr) - f_k\bigl({x}^{\text{best}}_{k,t}(c),c\bigr)
\;\le\;
\max_{x_1,x_2 \in \mathcal{X}}
\bigl\{f_k(x_1,c) - f_k(x_2,c)\bigr\}.
\]
Summing over $c \in \mathcal{C}$ and dividing by the 
denominator of~\eqref{eq:disc_regret} gives $r_t \le 1$ almost surely for all $t \ge 1$. Consequently,
\[
\sum_{t=1}^{N\wedge T} r_t \le N\wedge T \le N,
\]
and taking expectations gives $\mathbb{E}\bigl[\sum_{t=1}^{N\wedge T} r_t\bigr] \le \mathbb{E}[N]$.

Next, conditional on $\{N = n\}$, no collaborative updates occur after time $n$. That is, the switching variables satisfy $X_t = 0$ for all $t > n$, and the algorithm therefore reduces to MTS from iteration $n + 1$ onward. Under this reduction, the regret incurred after time $n$ is governed solely by the independent Thompson sampling scheme, and we may apply Lemma 2 directly. Therefore, by Lemma~\ref{lemma1},
\[
\mathbb{E}\!\left[\sum_{t=n+1}^{T} r_t \,\middle|\, N=n\right]
\le
\sum_{s=1}^{T-n}
|\mathcal{C}|
\left(
\frac{1}{s}
+
\sqrt{\frac{|\mathcal{X}||\mathcal{C}|\gamma_s}{2s}}
\right),
\]
where we re-index $s = t - n$ so that $s = 1$ corresponds to the first MTS iteration at time $n+1$, and $\gamma_s$ denotes the maximum information gain after $s$ observations in the MTS phase.

Since the maximum information gain $\gamma_t$ is 
non-decreasing in $t$ (i.e., adding more observations can only 
increase the information gain), we have $\gamma_s \le 
\gamma_T$ for all $s \le T$. Therefore,

\[
\mathbb{E}\!\left[\sum_{t=n+1}^{T} r_t \,\middle|\, N=n\right]
\le
\sum_{s=1}^{T}
|\mathcal{C}|
\left(
\frac{1}{s}
+
\sqrt{\frac{|\mathcal{X}||\mathcal{C}|\gamma_s}{2s}}
\right).
\]
Taking expectation over $N$, we obtain
\[
\mathbb{E}\!\left[\sum_{t=N\wedge T+1}^{T} r_t\right]
\le
\sum_{s=1}^{T}
|\mathcal{C}|
\left(
\frac{1}{s}
+
\sqrt{\frac{|\mathcal{X}||\mathcal{C}|\gamma_s}{2s}}
\right).
\]
Combining the two parts yields
\begin{align*}
\mathbb{E}[R_T]&\le\mathbb{E}[N]+\sum_{s=1}^{T}|\mathcal{C}|\left(\frac{1}{s}+
\sqrt{\frac{|\mathcal{X}||\mathcal{C}|\gamma_s}{2s}}
\right)\\
&\le
\sum_{t=1}^\infty t\,p_t
+
\sum_{s=1}^{T}
|\mathcal{C}|
\left(
\frac{1}{s}
+
\sqrt{\frac{|\mathcal{X}||\mathcal{C}|\gamma_s}{2s}}
\right)\qquad\text{(by Lemma~\ref{lemma:upperN})}\\
&\le \mathcal{O}(1)
+
|\mathcal{C}|\sum_{s=1}^{T}\frac{1}{s}
+
|\mathcal{C}|\sum_{s=1}^{T}
\sqrt{\frac{|\mathcal{X}||\mathcal{C}|\gamma_s}{2s}}\qquad\text{($\sum_{t=1}^\infty t\,p_t<\infty$)}\\
&\le\mathcal{O}(\log T)
+
\sqrt{\gamma_T}|\mathcal{C}|\sum_{s=1}^{T}
\sqrt{\frac{|\mathcal{X}||\mathcal{C}|}{2s}}\qquad\text{($\gamma_t$ is non-decreasing)}\\
&\le\mathcal{O}(\log T)
+\mathcal{O}(\sqrt{\gamma_T T}).
\end{align*}
\end{proof}

\subsection{Proof for Theorem~\ref{sharebd}}
\label{app:thm2}
We first introduce the Azuma--Hoeffding Inequality, which 
can be used to control the deviation of a sum of bounded 
independent random variables from its mean.

\begin{lemma}[Azuma--Hoeffding Inequality]
\label{lem:azuma}
Let $(\mathcal{F}_t)_{t\ge 0}$ be a filtration and let 
$(M_t)_{t=0}^T$ be a martingale with respect to 
$(\mathcal{F}_t)$ such that the increments are almost 
surely bounded:
\[
|M_t - M_{t-1}| \le c_t \quad \text{a.s.} 
\qquad (t=1,\dots,T).
\]
Then, for any $\epsilon>0$,
\[
\Pr\!\big(M_T - M_0 \ge \epsilon\big) 
\;\le\; 
\exp\!\left(
  -\frac{\epsilon^2}{2\sum_{t=1}^T c_t^2}
\right).
\]
\end{lemma}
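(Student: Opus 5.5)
The plan is the standard exponential-moment (Chernoff) argument, applied to the martingale increments one step at a time. Write $D_t := M_t - M_{t-1}$ for $t=1,\dots,T$. The martingale property gives $\mathbb{E}[D_t \mid \mathcal{F}_{t-1}] = 0$, and the hypothesis gives $|D_t| \le c_t$ almost surely. Fix $\lambda > 0$. Markov's inequality applied to $e^{\lambda (M_T - M_0)}$ yields
\[
\Pr\bigl(M_T - M_0 \ge \epsilon\bigr) \le e^{-\lambda \epsilon}\, \mathbb{E}\bigl[e^{\lambda \sum_{t=1}^T D_t}\bigr].
\]
The proof then reduces to showing $\mathbb{E}\bigl[e^{\lambda \sum_{t=1}^T D_t}\bigr] \le \exp\bigl(\lambda^2 \sum_{t=1}^T c_t^2/2\bigr)$ and optimizing over $\lambda$.

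The key ingredient is a conditional version of Hoeffding's lemma. If $D$ satisfies $\mathbb{E}[D \mid \mathcal{G}] = 0$ and $|D| \le c$, then $\mathbb{E}[e^{\lambda D} \mid \mathcal{G}] \le e^{\lambda^2 c^2/2}$.

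To prove it, use convexity of $x \mapsto e^{\lambda x}$ on $[-c,c]$:
\[
e^{\lambda D} \le \frac{c - D}{2c} e^{-\lambda c} + \frac{c + D}{2c} e^{\lambda c}.
\]
Taking conditional expectations and using the zero conditional mean gives $\mathbb{E}[e^{\lambda D}\mid\mathcal{G}] \le \cosh(\lambda c)$. Comparing the Taylor series term by term, $(2k)! \ge 2^k k!$, gives $\cosh(\lambda c) \le e^{\lambda^2 c^2/2}$. This is the step I expect to need the most care, but it is elementary. Note that it uses the symmetric interval $[-c_t, c_t]$; that is exactly why the constant in the bound is $2\sum_t c_t^2$ rather than the sharper $\sum_t c_t^2/2$ one gets from interval width.

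Next, peel off one increment at a time using the tower property. Since $\sum_{t=1}^{T-1} D_t$ is $\mathcal{F}_{T-1}$-measurable,
\[
\mathbb{E}\bigl[e^{\lambda \sum_{t=1}^T D_t}\bigr]
= \mathbb{E}\Bigl[e^{\lambda \sum_{t=1}^{T-1} D_t}\, \mathbb{E}\bigl[e^{\lambda D_T}\mid \mathcal{F}_{T-1}\bigr]\Bigr]
\le e^{\lambda^2 c_T^2/2}\, \mathbb{E}\bigl[e^{\lambda \sum_{t=1}^{T-1} D_t}\bigr].
\]
Inducting downward over $t$ gives $\mathbb{E}\bigl[e^{\lambda (M_T - M_0)}\bigr] \le \exp\bigl(\lambda^2 \sum_{t=1}^T c_t^2/2\bigr)$.

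Combining with the Markov step,
\[
\Pr\bigl(M_T - M_0 \ge \epsilon\bigr) \le \exp\Bigl(-\lambda\epsilon + \tfrac{\lambda^2}{2}\sum_{t=1}^T c_t^2\Bigr).
\]
Minimizing the exponent with $\lambda = \epsilon / \sum_{t=1}^T c_t^2$ yields exactly $\exp\bigl(-\epsilon^2 / (2\sum_{t=1}^T c_t^2)\bigr)$, which is the claimed bound.
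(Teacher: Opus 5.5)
Your proof is correct. Markov's inequality on $e^{\lambda(M_T-M_0)}$, the conditional bound $\mathbb{E}[e^{\lambda D_t}\mid\mathcal{F}_{t-1}]\le\cosh(\lambda c_t)\le e^{\lambda^2c_t^2/2}$, peeling off increments with the tower property, and the choice $\lambda=\epsilon/\sum_t c_t^2$ give exactly the stated bound. The only caveat is trivial: a step with $c_t=0$ has $D_t=0$ a.s.\ and contributes a factor $1$. The paper gives no proof of its own; it simply invokes Azuma--Hoeffding as a classical result before applying it with $c_t=1$ in the proof of Theorem~\ref{sharebd}. So your argument is the standard textbook proof that this citation stands in for.

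One clarification on your closing remark: the symmetric convexity step loses nothing, since Hoeffding's lemma on $[-c_t,c_t]$ yields the same $e^{\lambda^2c_t^2/2}$. The constant $2\sum_t c_t^2$ comes from the hypothesis, which only confines $D_t$ to an interval of width $2c_t$. The sharper $\exp(-2\epsilon^2/\sum_t c_t^2)$ needs the stronger assumption that $D_t$ lies in a predictable interval of width $c_t$.
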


\begin{proof}[Proof of Theorem~\ref{sharebd}]

\textit{Case $K=1$.}
Denote by $I_T^{(1)} = \sum_{t=1}^T Q_t$ the total 
number of collaborative rounds for a single client up 
to time $T$, where $Q_t \in \{0,1\}$ indicates whether 
posterior mean sharing occurs at iteration $t$, with 
$\Pr(Q_t = 1) = p_t$. Note that $p_t$ is the per-client 
probability of triggering collaboration at iteration $t$, 
and each client makes this decision independently of all 
other clients and of past decisions. We have 
$\mathbb{E}[Q_t] = p_t$.

Define $Y_t \;=\; \sum_{s=1}^t \big(Q_s - p_s\big), 
\qquad Y_0 = 0.$
Since the Bernoulli gate at iteration $t$ is drawn 
independently of all past decisions, $Q_t$ is independent 
of $\mathcal{F}_{t-1}$, and therefore
\[
\mathbb{E}[Y_t \mid \mathcal{F}_{t-1}]= Y_{t-1} + \mathbb{E}[Q_t - p_t \mid \mathcal{F}_{t-1}]= Y_{t-1} + \mathbb{E}[Q_t] - p_t= Y_{t-1},
\]

so $Y_t$ is a martingale with respect to the natural 
filtration, and $\mathbb{E}[Y_t] = 0$ for all $t \ge 1$. 
The increments are bounded almost surely:
\[
|Y_t - Y_{t-1}| \;=\; |Q_t - p_t| \;\le\; 1 
\quad \text{a.s.}
\]
Applying Lemma~\ref{lem:azuma} with $c_t = 1$ for all 
$t$ and setting $\epsilon = \sqrt{2T\log(1/\delta)}$ gives
\[
\Pr\!\Big(Y_T \ge \sqrt{2T\log(1/\delta)}\Big)
\;\le\; 
\exp\!\left(-\frac{2T\log(1/\delta)}{2T}\right)
= \delta.
\]
Hence, with probability at least $1 - \delta$,
\[
Y_T \;\le\; \sqrt{2T\log(1/\delta)}.
\]
Since $I_T^{(1)} = \sum_{t=1}^T p_t + Y_T$, we obtain 
with probability at least $1 - \delta$:
\[
I_T^{(1)} 
\;\le\; 
\sum_{t=1}^T p_t + \sqrt{2T\log(1/\delta)}
\;=\; 
\mathcal{O}\!\left(\sqrt{T\log(1/\delta)}\right),
\]
where the last step uses $\sum_{t=1}^T p_t = 
\mathcal{O}(\sqrt{T})$ by the assumption 
$\sum_{t=1}^T p_t = \mathcal{O}(\sqrt{T})$ in 
Theorem~\ref{sharebd}.

\textit{General $K \ge 1$, without a central server.}
At each collaborative iteration, each client broadcasts 
a $D$-dimensional weight vector to all other $K-1$ 
clients, incurring a per-round communication cost of 
$\mathcal{O}(DK)$. Since the total number of 
collaborative rounds per client is $I_T^{(1)} = 
\mathcal{O}(\sqrt{T\log(1/\delta)})$ with probability 
at least $1-\delta$ (from the $K=1$ case above), the 
total communication cost satisfies
\[
I_T \;=\; 
\mathcal{O}\!\left(DK\sqrt{T\log(1/\delta)}\right).
\]

\textit{General $K \ge 1$, with a central server.}
With a central server, each client sends only a 
$D$-dimensional weight vector to the server per 
collaborative round, incurring a per-round cost of 
$\mathcal{O}(D)$ per client. The total communication 
cost is therefore $\mathcal{O}(D \cdot I_T^{(1)}) = 
\mathcal{O}(D\sqrt{T\log(1/\delta)})$. The 
$\mathcal{O}(DT)$ term in the minimum follows from the 
trivial upper bound that there are at most $T$ total 
iterations, each incurring at most $\mathcal{O}(D)$ 
communication cost per client to the server. Combining both bounds gives
\[
I_T \;=\; 
\mathcal{O}\!\left(
  \min\!\left\{
    DK\sqrt{T\log(1/\delta)},\; DT
  \right\}
\right).
\]
Substituting $I_T^{(1)} = 
\mathcal{O}(\sqrt{T\log(1/\delta)})$ into both 
architectures recovers exactly the bounds stated in 
Theorem~\ref{sharebd}.
\end{proof}

\section{Details for Simulation Studies}\label{appsim}
\label{app:benchmarks}

We adopt three standard continuous benchmark functions, \texttt{hartmann}, \texttt{levy}, and \texttt{ackley}, to evaluate the performance of our contextual Bayesian optimization framework.  
Let the input be decomposed into a \emph{context} vector $c \in [0,1]^{D_c}$ and a \emph{design} vector $x \in [0,1]^{D_x}$, with total dimension $D = D_c + D_x$.  
Throughout all experiments, the input domain is normalized as
\[
u = (c, x) \in [0,1]^{D_c + D_x}.
\]

\subsection{\texttt{hartmann} Function}
The six-dimensional \texttt{hartmann} function is defined on $[0,1]^6$ as
\[
f_{\texttt{hartmann}}(v)
= - \sum_{i=1}^{4} \alpha_i
    \exp\!\left(
    -\sum_{j=1}^{6} A_{ij}(v_j - P_{ij})^2
    \right),
\]
where
\[
\alpha = (1.0,\, 1.2,\, 3.0,\, 3.2),
\]
\begin{align*}
A &=
\begin{bmatrix}
10 & 3 & 17 & 3.5 & 1.7 & 8 \\
0.05 & 10 & 17 & 0.1 & 8 & 14 \\
3 & 3.5 & 1.7 & 10 & 17 & 8 \\
17 & 8 & 0.05 & 10 & 0.1 & 14
\end{bmatrix}\\P &= 10^{-4}\!\times
\begin{bmatrix}
1312 & 1696 & 5569 & 124 & 8283 & 5886 \\
2329 & 4135 & 8307 & 3736 & 1004 & 9991 \\
2348 & 1451 & 3522 & 2883 & 3047 & 6650 \\
4047 & 8828 & 8732 & 5743 & 1091 & 381
\end{bmatrix}.
\end{align*}

In our setting, we use $D_c = 2$ and $D_x = 2$.  
To embed the 4-dimensional contextual–design pair $(c, x)$ into the 6-dimensional \texttt{hartmann} input $v \in [0,1]^6$, we define
\[
v = (c_1,\, c_2,\, x_1,\, x_2,\, 0.5,\, 0.5),
\]
where the last two coordinates are fixed at $0.5$ to maintain the same dimension as the original \texttt{hartmann}-6 definition.  
This preserves the nonlinear coupling of the original function while allowing controlled contextual variations.

\subsection{\texttt{levy} Function}
The \texttt{levy} function is originally defined on $[-10, 10]^D$ and is rescaled to $u_i \in [0,1]$ by
\[
z_i = -10 + 20u_i, \qquad w_i = 1 + \frac{z_i - 1}{4}.
\]
The resulting form is
\[
f_{\texttt{levy}}(u)= -\sin^2(\pi w_1)- \sum_{i=1}^{D-1} (w_i - 1)^2 \!\left[1 + 10 \sin^2(\pi w_i + 1)\right]- (w_D - 1)^2 \!\left[1 + \sin^2(2\pi w_D)\right].
\]

The global maximum occurs at $u_i = 0.5$ (corresponding to $z_i = 0$) with $f_{\texttt{levy}}(u^*) = 0$.

\subsection{\texttt{ackley} Function}
The \texttt{ackley} function is defined on $[-32.768, 32.768]^D$.  
After rescaling each $u_i \in [0,1]$ to
\[
z_i = -32.768 + 65.536u_i,
\]
it becomes
\[
f_{\texttt{ackley}}(u)= a \exp\!\left(-b \sqrt{\frac{1}{D}\sum_{i=1}^{D} z_i^2}\right)+\exp\!\left(\frac{1}{D}\sum_{i=1}^{D}\cos(c z_i)\right)
 - a - e,
\]
where $a = 20$, $b = 0.2$, and $c = 2\pi$.  
The global maximum occurs at $u_i = 0.5$ with $f_{\texttt{ackley}}(u^*) = 0$.

\section{Details for Simulating Grain Size in Hot Rolling Process}\label{appreal}

We first compute the equivalent flow stress $\sigma$ in the deformation zone, which is estimated based on the average pressure $p_{\text{avg}}$ obtained from the roller contact mechanics model. Under the assumption of plane strain hot rolling, the average stress is given by $\sigma = \frac{F_r}{b \sqrt{R(h_0 - h_f)}},$ where $b$ is the sheet width.

The strain rate $\dot{\varepsilon}$ is not directly prescribed but instead derived from the Zener–Hollomon parameter $Z$, which captures the thermally activated deformation kinetics: $Z = A \left[\sinh(\alpha \sigma)\right]^n,$ where $A$, $\alpha$, and $n$ are material-dependent constants. By inverting the Zener–Hollomon expression and combining it with the Arrhenius relation $Z = \dot{\varepsilon} \exp\left(\frac{Q}{RT_{\text{ave}}}\right),$
where $Q$ is the activation energy ($2.5\times10^5$~J/mol in this study), we estimate the equivalent strain rate under varying thermal and stress conditions. 

The total true strain is $\varepsilon = \ln\!\left(\frac{h_0}{h_f}\right),$ and together with $\dot{\varepsilon}$, defines the effective deformation time $t_{\text{eff}} = \varepsilon / \dot{\varepsilon}$, which plays a key role in grain growth modeling. The final grain size $d_{\text{final}}$ is modeled by combining two competing mechanisms:

\paragraph{Dynamic Recrystallization (DRX)} which refines grains during deformation: $d_{\text{DRX}} = C \, \varepsilon^{-\gamma} \, \dot{\varepsilon}^{-m},$
where $C$, $\gamma$, and $m$ are material constants ($C=5000$, $\gamma=0.5$, $m=0.2$).

\paragraph{Grain Growth} which increases grain size during and after deformation: $d_{\text{GG}} = K_g \, t_{\text{eff}}^{1/n_g} \exp\!\left(-\frac{Q_g}{RT_{\text{ave}}}\right) f(\sigma),$ where $K_g = 1\times10^{-6}$, $Q_g = 2.5\times10^5$, $n_g = 2$, and $f(\sigma)$ represents the stress-sensitive growth suppression term, typically defined as $f(\sigma) = \exp(-\beta \sigma).$ The final grain size is computed as a generalized norm: $d_{\text{final}} = \left( d_{\text{DRX}}^{-p} + d_{\text{GG}}^{-p} \right)^{-1/p},$ which enables a smooth transition between recrystallization-dominated and growth-dominated regimes, depending on strain rate and temperature.

\vspace{0.3cm}
To discourage physically infeasible or economically undesirable process parameters (e.g., excessively high rolling force or large roller radius), we introduce penalty functions into the grain size formulation. Let $R_{\min}$, $R_{\max}$, $F_{r,\min}$, and $F_{r,\max}$ denote the bounds of sampled process parameters. The reference values and scaling factors are defined as:
\[
R_{\text{ref}} = \frac{R_{\max}+R_{\min}}{2}, \quad s_R = R_{\max}-R_{\min},
\]
\[
F_{r,\text{ref}} = \frac{F_{r,\max}+F_{r,\min}}{2}, \quad s_{F_r} = F_{r,\max}-F_{r,\min}.
\]

The penalty functions are then given by:
\[
P_{F_r} = \lambda_F \left(\frac{F_r - F_{r,\text{ref}}}{s_{F_r}}\right)^2, \qquad
P_R = \lambda_R \max\left(0, \frac{R - R_{\text{ref}}}{s_R}\right)^2,
\]
and the total penalty is $P_{\text{total}} = P_{F_r} + P_R.$ Finally, the penalized final grain size is expressed as $d_{\text{final}}' = d_{\text{final}} \, (1 + P_{\text{total}}),$ which ensures that the optimization framework balances metallurgical performance with realistic process conditions.

\end{document}